\newtheorem{thm}{Theorem}[section]
\newtheorem{cor}[thm]{Corollary}
\newtheorem{lem}[thm]{Lemma}
\newtheorem{remark}[thm]{Remark}
\newtheorem{Def}{Definition}
\renewcommand{\v}{{\boldsymbol{v}}}
\newcommand{\w}{{\boldsymbol{w}}}
\def\tr{{\mathrm{tr}}}
\title{Stochastic Variance Reduction Gradient for a Non-convex Problem Using Graduated Optimization}
\author{
Li Chen~~~~~~~~Shuisheng Zhou\thanks{Corresponding author: sszhou@mail.xidian.edu.cn}~~~~~~~~Zhuan Zhang\\
School of Mathematics and Statistics, Xidian University, Xi'an, 710071 China. \\
}
\begin{document}

\maketitle

\begin{abstract}
In machine learning, nonconvex optimization problems with multiple local optimums are often encountered. Graduated Optimization Algorithm (GOA) is a popular heuristic method to obtain global optimums of nonconvex problems through progressively minimizing a series of convex approximations to the nonconvex problems more and more accurate. Recently, such an algorithm GradOpt based on GOA is proposed with amazing theoretical and experimental results, but it mainly studies the problem which consists of one nonconvex part. This paper aims to find the global solution of a nonconvex objective with a convex part plus a nonconvex part based on GOA. By graduating approximating non-convex part of the problem and minimizing them with the Stochastic Variance Reduced Gradient (SVRG) or proximal SVRG, two new algorithms, SVRG-GOA and PSVRG-GOA, are proposed. We prove that the new algorithms have lower iteration complexity ($O(1/\varepsilon)$) than GradOpt ($O(1/\varepsilon^2)$). Some tricks, such as enlarging shrink factor, using project step, stochastic gradient, and mini-batch skills, are also given to accelerate the convergence speed of the proposed algorithms. Experimental results illustrate that the new algorithms with the similar performance can converge to 'global' optimums of the nonconvex problems, and they converge faster than the GradOpt and the nonconvex proximal SVRG.
\end{abstract}

\section{Introduction}
In machine learning, we often encounter the following optimization problem:
\begin{equation}\label{eq:problem}
\min\limits_{\mathbf{w}} F(\mathbf{w})=h(\mathbf{w})+f(\mathbf{w}),
\end{equation}
where $h:\mathbb{R}^d\rightarrow \mathbb{R}$ is a convex function. Problem \eqref{eq:problem} encompasses a wide variety of problems
which have been studied in many different areas, including image restoration \cite{selesnick2015convex}, pattern recognition \cite{laporte2014nonconvex}, and compressed sensing \cite{donoho2006compressed}.
 In regularized loss minimization problems, $h(\mathbf{w})$ and $f(\mathbf{w})$ could be considered as the regularization term and the loss term respectively. For example, given a training set $\{(\mathbf{x}_i,y_i)\}_{i=1}^n$, where $\mathbf{x}_i\in \mathbb{R}^d$ and $y_i\in\mathbb{R}$, $F(\mathbf{w})=\frac{\lambda}{2}\| \mathbf{w}\|_2^2+\frac{1}{n}\sum\limits_{i=1}^n L(\mathbf{w}^\top \mathbf{x}_i-y_i)$ is the support vector machine (SVM), where $\lambda$ is the positive regularization parameter, and $L(\cdot)$ is called the loss function of the sample $(\mathbf{x}_i,y_i)$.

 If $f(\mathbf{w})$ is convex, the ordinary convex optimization methods, such as gradient, dual, and so on, can solve Problem \eqref{eq:problem}. Furthermore, if $f(\mathbf{w})$ is represented by limit-sum (i.e. $f(\mathbf{w})=\frac{1}{n}\sum_{i=1}^nf_i(\mathbf{w})$), some stochastic methods, such as Stochastic Gradient Descent (SGD) and Proximal SGD (Prox-SGD) can be adopted. Recent progress is the variance reduced stochastic methods, such as SVRG (stochastic variance reduced gradient)\cite{johnson2013accelerating}, SAG (stochastic average gradient) \cite{roux2012stochastic, schmidt2013minimizing}, SDCA (stochastic dual coordinate ascent) \cite{shalev2013stochastic} and SAGA \cite{defazio2014saga}. There are also some improved methods including proximal stochastic based methods \cite{johnson2013accelerating, shalev2014accelerated, li2015accelerated, zhang2015stochastic, allen2016even, allen2016katyusha} proposed in the past few years.

  However, in computer vision and machine learning, we are more interested in non-convex functions $f(\mathbf{w})$
  due to its special advantages and extensive applications. For example, using non-convex loss function, such as truncated hinge loss \cite{wu2007robust}, ramp loss \cite{liu2016ramp} and robust loss \cite{feng2016robust}, one can reduce the influence of noise to models. Moreover, the deep neural networks are also the highly non-convex optimization problems.

At present, more and more people focus on solving non-convex problems. In terms of the case that $f(\mathbf{w})$ is a limit-sum of non-convex functions, H. Li \cite{li2015accelerated} and S. Ghandimi \cite{ghadimi2016accelerated} provide accelerated GD and Prox-GD, and show that these methods can converge if the parameters are tuned properly. The results of S. J. Reddi \cite{reddi2016proximal, reddi2016stochastic} and Z. A. Zhu \cite{allen2016variance} indicate that SVRG and Proximal SVRG (Prox-SVRG) can be used to solve non-convex finite-sum problems, and show that they convergent faster than SGD and GD. But our experimental results illustrate that SVRG and Prox-SVRG may not converge to the global optimization for non-convex functions.

Graduated optimization algorithm (GOA) \cite{blake1987visual,ye2003estimating,gashler2011manifold} is a global searching algorithm for nonconvex problems. It starts from an initial estimate and progressively minimizes a series of finer and simpler approximations to the original problem. In these sequences, if the solution to the previous problem falls within the locally convex region around the solution to the next problem, then the algorithm will find the globally optimal solution to the nonconvex optimization problem at the end of the sequence optimization. There are some ways to progressively deform the nonconvex objective to some convex task. One possible principle is by Gaussian smoothing \cite{duchi2012randomized} $\widehat{F}_\delta(\mathbf{w})=\mathbf{E}_u[ F(\mathbf{w}+\delta u)]$, where $u$  follows Gaussian distribution. Gaussian smoothing constructs a collection of functions ranging from a highly smoothed to the actual nonconvex function by adjusting $\delta$ from high to low.

 Hazan et al. \cite{hazan2016graduated} proposed the GradOpt method based on GOA and Suffix-SGD for a class of non-convex functions $(a,\sigma)$-nice. GradOpt constructs a series of local strong convex functions with $u$ according to uniform distributions on norm balls.
 Then Suffix-SGD is used to solve these local strong convex functions efficiently. Hazan et al. \cite{hazan2016graduated} prove that GradOpt with proper parameters is able to converge to a global optimum for $(a,\sigma)$-nice functions, and give the convergence rate of GradOpt. Experimental results in \cite{hazan2016graduated} show that GradOpt is faster and yields a much better solution than Minibatch SGD. However, our experimental results illustrate that GradOpt has some shortcomings, for example, the Suffix-SGD which is used in GradOpt has slow convergence due to the inherent variance, the smooth version is far from the original function at the initial steps and the conditions of the definition of $(a,\sigma)$-nice are so strong that the application range of GradOpt is limited. For the non-$(a,\sigma)$-nice functions, GradOpt converges gradually slow with the increase of iterations and may terminate before finding the global optimum.

In this paper, our proposed methods overcome the shortcomings of the GradOpt. We study a class of non-convex optimization problems: a sum of a convex function and a nonconvex function which has multiple local optimums, and propose two low-complexity iteration algorithms to fast converge to the \textbf{global optimum} of such non-convex optimization problems.

\textbf{Main Contributions.}
Our main contributions are stated below:
\begin{itemize}
\item SVRG/prox-SVRG is applied in the new algorithms instead of the Suffix-SGD as used in GradOpt. In the SVRG/Prox-SVRG in our algorithms, a projection step is added to avoid iteration points stepping out of a bound.
\item The new algorithms are proved that they have the complexity (the number of iterations to obtain a $\varepsilon$-accuracy solution) $O(1/\varepsilon)$.  This is far superior to the complexity of GradOpt $O(1/\varepsilon^2)$ \cite{hazan2016graduated}.
\item By introducing a shrink parameter, the new definition $(a,c,\sigma)$-nice is proposed. Moreover, we design a better convex approximation method for the non-convex model \eqref{eq:problem} than $\delta$-smooth in GradOpt.
\end{itemize}

This paper is structured as follows: Section 2 gives some definitions used in this paper. Section 3 and 4 present two new algorithms and their theorem analyses. Section 5 proposes two extensions for our algorithms. Section 6 describes the experimental results. Section 7 concludes this paper and points out future research directions.

\section{Setting and Definitions}
\textbf{Notation}: During this paper, we use $\mathbb{B}$ to denote the unit Euclidean ball in $\mathbb{R}^d$ and $\mathbb{B}_r(\mathbf{w})$ as the Euclidean $r$-ball in $\mathbb{R}^d$ centered at $\mathbf{w}$. $u\sim \mathbb{B}$ denotes a random variable distributed uniformly over $\mathbb{B}$. Then we have $\mathbf{E}_{u\sim\mathbb{B}}[u]=0$ since the uniform ball smoothing with parameter $\delta$ is equivalent to (zero mean) Gaussian smoothing \cite{hazan2016graduated} and let $\mathbf{E}_{u\sim\mathbb{B}}[uu^\top]=\varrho^2 I$. For convenience, $\|\cdot\|^2$ denotes $\|\cdot\|_2^2$ in this paper.
\subsection{Common Definitions}
Recall the definitions of strong-convex and $L$-Lipschitz functions as follows \cite{shalev2014understanding}.
\begin{Def}
($\sigma$-strongly-convex) A function $f:\mathbb{R}^d\rightarrow \mathbb{R}$ is said to be $\sigma$-strongly convex over a set $\mathcal{C}$ if for any $\mathbf{w}_1, \mathbf{w}_2\in\mathcal{C}$ the following holds,
  \begin{equation*}
  f(\mathbf{w}_2)-f(\mathbf{w}_1)\geq \nabla f(\mathbf{w}_1)^\top(\mathbf{w}_2-\mathbf{w}_1)+\frac{\sigma}{2}\|\mathbf{w}_1-\mathbf{w}_2\|^2.
  \end{equation*}
\end{Def}

\begin{Def}
($L$-smoothness) A differentiable function $f:\mathbb{R}^d\rightarrow \mathbb{R}$ is $L$-smooth if its gradient is $L$-Lipschitz; namely, for all $\mathbf{w}_1, \mathbf{w}_2$ we have $\|\nabla f(\mathbf{w}_1)-\nabla f(\mathbf{w}_2)\|\leq L\|\mathbf{w}_1-\mathbf{w}_2\|$.
\end{Def}
\subsection{Partial $\delta$-smooth}\label{sec:smooth}
To construct finer and finer approximations to the original objective function, Paper \cite{hazan2016graduated} defines $\delta$-smooth by adopting the uniform ball smoothing strategy \cite{duchi2012randomized}. According to $\delta$-smooth, $L$-Lipschitz nonconvex function $F(\mathbf{w})$ is smoothed as $$\hat {F}_\delta (\mathbf{w})=\mathbf{E}_{u\sim \mathbb{B}}[F(\mathbf{w}+\delta u)].$$
 where $\mathbf{E}(\cdot)$ denotes expectation, $\delta$ is the smooth parameter.

 $\delta$-smooth can transform a nonconvex function into a local strong convex function. But the smoothed version is a little far from the original function, see Fig. \ref{fig:1dim}. Fig. \ref{fig:1dim} gives a 1-dim non-convex function: $y=\frac{w^2}{2}-0.3[\exp\frac{-(w-1)^2}{0.02}-\exp\frac{-(w+1.3)^2}{0.045}]$ and its different convex smoothed versions with $\delta=1$. In Fig. \ref{fig:1dim}, $\delta$-smooth is a little far from the original function. The primary cause is $\frac{1}{2}\mathbf{E}_{u\sim \mathbb{B}}[({w}+\delta u)^2]=\frac{1}{2}w^2+\frac{1}{2}\delta^2 \varrho^2$, here we used $\mathbf{E}_{u\sim \mathbb{B}}[u]=0$. To make the smoothed version closer to the original function, we extend the $\delta$-smooth and give our Partial $\delta$-smooth below.
 \begin{Def}\label{def:smooth}
(Partial $\delta$-smooth) Let $\delta \geq 0$. Given an $L$-smoothness function $F:\mathbb{R}^d\rightarrow \mathbb{R}$ define its Partial $\delta$-smooth function to be
   \begin{equation*}
   \hat {F}_\delta (\mathbf{w})=h(\mathbf{w})+\hat{f}_\delta (\mathbf{w})=h(\mathbf{w})+\mathbf{E}_{u\sim \mathbb{B}}[f(\mathbf{w}+\delta u)].
   \end{equation*}
\end{Def}

 \begin{remark}
The reason Partial $\delta$-smooth is the convex proximate of the original function is as follows.
  \begin{equation*}
   \begin{split}
   &\hat {F}_\delta (\mathbf{w})=h(\mathbf{w})+\mathbf{E}_{u\sim\mathbb{B}}[f(\mathbf{w}+\delta u)]\\
   &\approx h(\mathbf{w})+\mathbf{E}_{u\sim\mathbb{B}}[f(\mathbf{w})+\delta \nabla f(\mathbf{w})^\top u+\frac{\delta^2}{2}u^\top\nabla^2 f(\mathbf{w})u]\\
   &=h(\mathbf{w})+f(\mathbf{w})+\frac{\delta^2}{2}\mathbf{E}_{u\sim\mathbb{B}}[u^\top\nabla^2 f(\mathbf{w})u]\\
   &=h(\mathbf{w})+f(\mathbf{w})+\frac{\delta^2}{2}\mathbf{E}_{u\sim\mathbb{B}}[\tr(\nabla^2 f(\mathbf{w})u u^\top)]\\
   &=h(\mathbf{w})+f(\mathbf{w})+\frac{\delta^2}{2}\tr[\nabla^2 f(\mathbf{w})\mathbf{E}_{u\sim\mathbb{B}}(u u^\top)]\\
   &=F(\mathbf{w})+\frac{\delta^2}{2}\tr(\nabla^2 f(\mathbf{w}))\tr[\mathbf{E}_{u\sim\mathbb{B}}(u u^\top)]\\
   &=F(\mathbf{w})+\frac{\delta^2\varrho^2}{2}\tr(\nabla^2 f(\mathbf{w})).
   \end{split}
  \end{equation*}
    From the last equation, we can obtain the conclusion that the function values around the local minima become larger as $tr(\nabla^2 f(\mathbf{w}))\geq 0$. Similarly, function values around the local maxima become smaller as $tr(\nabla^2 f(\mathbf{w}))\leq 0$. So the original function is smoothed.
 \end{remark}

\begin{figure}[!htb]
\centering
\includegraphics[width=0.8\textwidth]{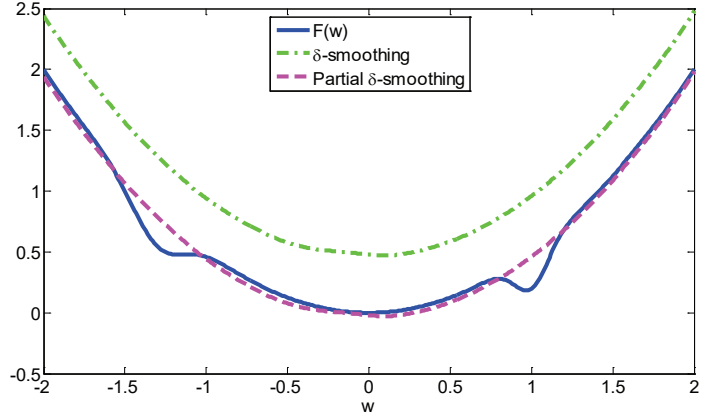}
\caption{Nonconvex function $F(\mathbf{w})=\frac{w^2}{2}-0.3[\exp\frac{-(w-1)^2}{0.02}-$ $\exp\frac{-(w+1.3)^2}{0.045}]$, and its two smoothed versions, $\delta=1$.}
\label{fig:1dim}
\end{figure}%

In Def. \ref{def:smooth}, the smaller $\delta$ is, the better approximation is for $f(\mathbf{w})$. If $\delta=0$, then $\hat{f}_\delta (\mathbf{w})=f(\mathbf{w})$ and $\hat{F}_\delta (\mathbf{w})=F(\mathbf{w})$. Partial $\delta$-smooth only smoothes the nonconvex part $f(\mathbf{w})$, but remains the convex part $h(\mathbf{w})$ unchanged for $F(\mathbf{w})=h(\mathbf{w})+f(\mathbf{w})$. By comparison, in $\delta$-smooth, both convex part and nonconvex part are smoothed. Fig. \ref{fig:1dim} confirms the effectiveness of our strategy. From Fig. \ref{fig:1dim}, it is easy to observe that the function yielded by Partial $\delta$-smooth is closer to original function than by $\delta$-smooth, and the functions yielded by these two methods are both local strong convex functions.
\subsection{$(a,c,\sigma)$-nice Functions} \label{sec:multimodal}
Hazan et al. \cite{hazan2016graduated} define a class of nonconvex functions $(a,\sigma)$-nice (here $a,\sigma>0$) and use GradOpt algorithm obtain the global optimum of $(a,\sigma)$-nice functions. In this subsection, we denote $\mathbf{w}_\delta^\ast\in \arg\min_{\mathbf{w}\in \mathcal{C}}\hat{F}_{a\delta}(\mathbf{w})$. For every $\delta>0$, $(a,\sigma)$-nice functions require $\hat{F}_{a\delta}(\mathbf{w})$ to be $\sigma$-strongly-convex in $\mathbb{B}_{3\delta}(\mathbf{w}_\delta^\ast)$ and $\|\mathbf{w}_{\delta}^\ast-\mathbf{w}_{\delta/2}^\ast\|\leq \delta/2$. The conditions in $(a,\sigma)$-nice functions are so strong that the application range of GradOpt is limited.
  To weaken the conditions of $(a,\sigma)$-nice functions and enlarge the application range of the GOA, we propose the following definition.
\begin{Def}\label{def:multimodal}
($(a,c,\sigma)$-nice) Denote $1/2\leq c<1$ as shrink factor, let $a,\sigma>0$. A function $F:\mathcal{C}\rightarrow \mathbb{R}$ is said to be $(a,c,\sigma)$-nice if, for every $\delta>0$, there exists $\mathbf{w}_{c\delta}^\ast$, such that $\|\mathbf{w}_{\delta}^\ast-\mathbf{w}_{c\delta}^\ast\|\leq c\delta$, and $\hat{F}_{a\delta}(\mathbf{w})$ is $\sigma$-strong-convex over $\mathbb{B}_{r\delta}(\mathbf{w}_\delta^\ast)$, where $r\geq1.5$.
\end{Def}

\begin{figure}[!htb]
\centering
\includegraphics[width=0.8\textwidth]{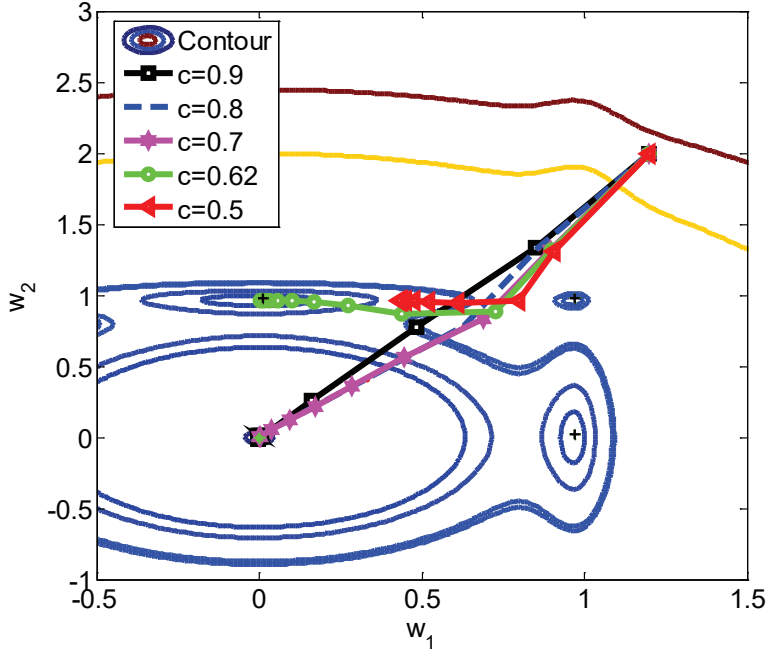}
\caption{GradOpt with varying $c$ values, '$\textcolor{blue}{\ast}$' $-$ init point, '$\color{green}{\ast}$' $-$ global optimum, '$+$' $-$ local optimums. There is only one global optimum and three local optimums.}
\label{fig:contour}
\end{figure}

$(a,c,\sigma)$-nice is the main type of functions that we discuss in this article. When we let $c=1/2$ and $r=3$, then $(a,c,\sigma)$-nice functions are similar to the $(a,\sigma)$-nice functions. $(a,c,\sigma)$-nice functions imply that the optimum of $\hat{F}_{a\delta}(\mathbf{w})$ is a good start for optimizing a finer approximated version $\hat{F}_{ca\delta}(\mathbf{w})$, and for every $\delta>0$, $\hat{F}_{a\delta} (\mathbf{w})$ is local strong convex in $\mathbb{B}_{r\delta}(\mathbf{w}_\delta^\ast)$. So GOA can be adopted to obtain the global minimization of $(a,c,\sigma)$-nice functions.

In $(a,c,\sigma)$-nice functions, it would be better to choose a larger $c$, such as $c=0.9$, because small $c$ may lead to an algorithm converging slowly after some iterations or termination before finding the global optimum, see Fig. \ref{fig:contour}.
 Fig. \ref{fig:contour} gives
 the results of running the GradOpt algorithm  with different $c$ on optimization problem $\min_{\mathbf{w}\in\mathbb{R}^2} \frac{\|\mathbf{w}\|^2}{2}-\frac{3}{10}[\exp\frac{-(w_1-1)^2}{0.02}-\exp\frac{-(w_2-1)^2}{0.02}]$. $c=1/2$ is corresponding to the results of performing original GradOpt. From Fig. \ref{fig:contour}, it can be shown that when $c$ is set larger, such as $c=0.9$, $c=0.8$ and $c=0.7$, GradOpt can find the global optimum. However, if $c$ is set small, such as $c=0.62$ and $c=0.5$, GradOpt may converge to local optimum or terminate before finding an optimum. In addition, $r$ is set as $3$ in the definition of $(a,\sigma)$-nice functions, whereas Theorem \ref{thm:6.1} proves that $r\geq 1.5$ is enough to guarantee an algorithm to converge to a global optimum point.

 \section{Graduated Optimization Algorithm with SVRG}\label{sec:svrg-goa}
  Suffix-SGD is used in GradOpt. It is an improved method of SGD. At each iteration $k=1,2,\ldots,T$, Suffix-SGD draws $i$ from $\{1,\ldots,n\}$ randomly, and $$\mathbf{w}_{k}=\Pi_\mathcal{C}(\mathbf{w}_{k-1}-\eta_k\nabla f_i(\mathbf{w}_{k-1})),$$ where $\eta_k$ decreases with the increasing of iteration number $k$, $\Pi_\mathcal{C}(\cdot)$ denotes projecting a vector onto an area $\mathcal{C}$. In Suffix-SGD, randomly sampling may lead to large variance which slows down the convergence speed. In order to overcome this shortcoming, we propose Algorithm \ref{alg:a1} based on SVRG.
\subsection{SVRG-GOA}
Algorithm \ref{alg:a1} is an improvement of the GradOpt algorithm \cite{hazan2016graduated}. It is based on the GOA and SVRG. The idea of Algorithm \ref{alg:a1} is that the original non-convex function is approximated by a series of local strong convex functions $\hat{F}_{\delta_m}$ finer and finer for $m=1,\ldots,M$. Then SVRG with project step is adopted on $\mathcal{C}_m$ which is the convex area of $\hat{F}_{\delta_m}$ according to Theorem \ref{thm:6.1} in each iteration.

\begin{thm}\label{thm:6.1}
 Consider $\mathcal{C}_m$ and $\mathbf{w}_{m+1}$ as defined in Algorithm \ref{alg:a1}. Also, denote by $\mathbf{w}_m^\ast$ the minimizer of $\hat F_{\delta_m}$ in $\mathcal{C}_m$. Then for all $1\leq m\leq M$, $\hat F_{\delta_m}$ is $\sigma$-strong convex over $\mathcal{C}_m$ and $\mathbf{w}_m^\ast \in \mathcal{C}_m$.
 \end{thm}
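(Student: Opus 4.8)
The plan is to argue by induction on the stage index $m$, taking as the inductive invariant precisely the two assertions of the statement: that $\hat F_{\delta_m}$ is $\sigma$-strongly convex on $\mathcal{C}_m$, and that the relevant minimizer $\mathbf{w}_m^\ast$ lies in $\mathcal{C}_m$ (I read $\mathbf{w}_m^\ast$ as the minimizer of $\hat F_{\delta_m}$ over the ambient $\mathcal{C}$, in the role of $\mathbf{w}_\delta^\ast$ from Definition \ref{def:multimodal}, so that ``$\mathbf{w}_m^\ast\in\mathcal{C}_m$'' is genuinely substantive). The engine of the argument is the $(a,c,\sigma)$-nice property: after matching the algorithm's smoothing parameter $\delta_m$ with the parameter $a\delta$ of the definition, so that $\delta=\delta_m/a$, the definition supplies two facts for free once the schedule $\delta_{m+1}=c\delta_m$ is in place. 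First, $\hat F_{\delta_m}$ is $\sigma$-strongly convex on $\mathbb{B}_{r\delta_m/a}(\mathbf{w}_m^\ast)$; second, consecutive minimizers drift only slightly, $\|\mathbf{w}_m^\ast-\mathbf{w}_{m+1}^\ast\|\leq c\delta_m/a=\delta_{m+1}/a$.

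For the base case I would check that the initial smoothing $\delta_1$ and the starting set $\mathcal{C}_1$ defined in Algorithm \ref{alg:a1} are chosen so that both invariants hold at $m=1$; this reduces directly to the nice property applied at scale $\delta_1$, since $\mathcal{C}_1$ is by construction a sub-ball of the strong-convexity region $\mathbb{B}_{r\delta_1/a}(\mathbf{w}_1^\ast)$ and is centered so as to contain $\mathbf{w}_1^\ast$.

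For the inductive step, assume both invariants at stage $m$. Strong convexity of $\hat F_{\delta_m}$ on $\mathcal{C}_m$, combined with the projected-SVRG guarantee (the projection keeping every iterate inside $\mathcal{C}_m$), yields that the output $\mathbf{w}_{m+1}$ is within a prescribed accuracy of $\mathbf{w}_m^\ast$, say $\|\mathbf{w}_{m+1}-\mathbf{w}_m^\ast\|\leq\tau_m$, where $\tau_m$ is controlled by the number of SVRG epochs run at stage $m$. I would then invoke the drift bound through the triangle inequality, $\|\mathbf{w}_{m+1}-\mathbf{w}_{m+1}^\ast\|\leq\tau_m+\delta_{m+1}/a$, to place $\mathbf{w}_{m+1}^\ast$ inside the next working set $\mathcal{C}_{m+1}=\mathbb{B}_{R_{m+1}}(\mathbf{w}_{m+1})$ centered at the new iterate, and apply a second triangle inequality to show that every point of $\mathcal{C}_{m+1}$ is within $r\delta_{m+1}/a$ of $\mathbf{w}_{m+1}^\ast$, so that strong convexity of $\hat F_{\delta_{m+1}}$ transfers from the nice property to all of $\mathcal{C}_{m+1}$.

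The main obstacle I anticipate is the radius bookkeeping that makes the inductive step close: one must verify the inequality $R_{m+1}+\tau_m+\delta_{m+1}/a\leq r\delta_{m+1}/a$ relating the working radius $R_{m+1}$ of $\mathcal{C}_{m+1}$, the per-stage SVRG accuracy $\tau_m$, and the minimizer drift $\delta_{m+1}/a$. This is exactly where the relaxed condition $r\geq 1.5$ in Definition \ref{def:multimodal} earns its keep: since $r-1\geq 1/2$, the strong-convexity ball has enough slack to absorb simultaneously the drift term and the optimization error, provided $R_{m+1}$ and the SVRG epoch count are chosen compatibly with $c$ and $a$. Pinning down these constants, and confirming that they match the exact definitions of $\mathcal{C}_m$, $\delta_m$, and the stopping accuracy built into Algorithm \ref{alg:a1}, is the delicate part; the remainder is routine propagation of triangle inequalities.
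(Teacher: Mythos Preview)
Your proposal follows essentially the same route as the paper: induction on $m$, using $\sigma$-strong convexity at stage $m$ to convert the SVRG function-value accuracy $\varepsilon_m=\sigma c^2\delta_m^2/8$ into the distance bound $\|\mathbf{w}_{m+1}-\mathbf{w}_m^\ast\|\le \delta_{m+1}/2$, then combining this with the $(a,c,\sigma)$-nice drift bound via the triangle inequality to place $\mathbf{w}_{m+1}^\ast$ inside $\mathcal{C}_{m+1}$ and to carry the strong-convexity invariant forward. Your explicit second triangle inequality (to embed $\mathcal{C}_{m+1}$ in the strong-convexity ball centered at $\mathbf{w}_{m+1}^\ast$) and your care about the factor $a$ are in fact more thorough than the paper's own write-up, which tacitly takes $a=1$ and glosses over that containment, but the skeleton of the argument is identical.
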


\begin{algorithm}[htp]
\caption{\textbf{\emph{SVRG-GOA}} $-$ \emph{\textbf{G}raduated \textbf{O}ptimization \textbf{A}lgorithm with \textbf{SVRG}}}\label{alg:a1}
\hspace*{0.02in} {\bf Input:} 
target error $\varepsilon>0$, step size $0<\eta<1$, decision set $\mathcal{C}$, iterative number $T$, shrink factor $0<c<1$ and $M=\sqrt{1/\varepsilon}\geq1$. Choose initial point $\mathbf{w}_1\in\mathcal{C}$ uniformly at random. Set $\delta_1=\text{diam}(\mathcal{C})$.\\
\hspace*{0.02in} {\bf Output:} 
$\mathbf{w}_{m+1}$
\begin{algorithmic}[1]
\FOR{$m=1$ to $M$}
    \STATE Set $\varepsilon_m:=\frac{\sigma c^2\delta_m^2}{8}$, and
     $S=\left(\log{\frac{\sigma \eta(1-2L\eta)T}{1+2L\eta^2\sigma T}}\right)^{-1}\log{\frac{\Delta F_m}{\varepsilon_m}},$ with $\Delta F_m=F(\mathbf{w}_m)-F(\mathbf{w}_\ast)$. \label{cod:s}

    \STATE Set shrunk decision set $\mathcal{C}_m:=\mathcal{C}\cap\mathbb{B}(\mathbf{w}_m,1.5\delta_m)$,

    \STATE $\widetilde{\mathbf{w}}_0=\mathbf{w}_m$;\texttt{//Perform SVRG over $\hat f_{\delta_ m}$}
        \FOR{ $s=1$ to $S$} \label{cod:begin_SVRG}
           \STATE $\widetilde{\mathbf{w}}=\widetilde{\mathbf{w}}_{s-1}$
           \STATE $\widetilde{g}=\mathbf{E}_{u\sim \mathbb{B}}[\nabla f(\widetilde{\mathbf{w}}+\delta_ m u)]$\label{cod:full_gradient}
           \STATE $\mathbf{w}_0=\widetilde{\mathbf{w}}$
            \FOR {$k=1$ to $T$ }
               \STATE    Randomly pick $u\in\mathcal{C}_m$ and update \label{cod:random}
               \STATE    $\v_k=\nabla h(\mathbf{w}_{k-1})+\nabla f(\mathbf{w}_{k-1}+\delta_ m u)-\nabla f(\widetilde{\mathbf{w}}+\delta_ m u)+\widetilde{g}$ \label{cod:v_k}
               \STATE    $\mathbf{w}_k=\Pi_{\mathcal{C}_m}(\mathbf{w}_{k-1}-\eta \v_k)$ \label{cod:prox}
            \ENDFOR
            \STATE set $\widetilde{\mathbf{w}}_{s}=\mathbf{w}_k$ for randomly chosen $k\in\{0,\dots,T-1\}$\label{cod:option1}
      \ENDFOR\label{cod:finish_SVRG}
      \STATE $\mathbf{w}_{m+1}=\widetilde{\mathbf{w}}_{s}$
      \STATE $\delta_{m+1}=c\delta_{m}$
\ENDFOR 
\end{algorithmic}
\end{algorithm}
In Algorithm \ref{alg:a1}, $\widetilde{\mathbf{w}}$ is an estimate of optimum $\mathbf{w}_\ast$. step \ref{cod:begin_SVRG} to step \ref{cod:finish_SVRG} are the variant of SVRG. We add a projection step in step \ref{cod:prox}. The projection is defined as $\Pi_\mathcal{C}(\v):=\arg\min_{\mathbf{w}\in\mathcal{C}}\|\mathbf{w}-\v\|$, $\forall \v\in\mathbb{R}^d$. In terms of the output of the SVRG, we also could set $\widetilde{\mathbf{w}}_{s}=\frac{1}{T}\sum_{k=1}^{T}\mathbf{w}_k$, $\widetilde{\mathbf{w}}_{s}=\mathbf{w}_T$ or $\widetilde{\mathbf{w}}_{s}=\frac{2}{T}\sum_{k=T/2+1}^{T}\mathbf{w}_k$. Experimental results show that these options have similar performances \cite{lin2014}.
\begin{remark}
In Algorithm \ref{alg:a1}, the computational cost of the Step \ref{cod:full_gradient} is high. Because we have to randomly sample $u$ from $\mathbb{B}$ and calculate $\nabla f(\widetilde{\mathbf{w}}+\delta_ m u)$ for many times to obtain $\widetilde{g}$. In practice, $\widetilde{g}$ can be approximately computed by $\nabla f(\widetilde{\mathbf{w}})$, because $\mathbf{E}_{u\sim \mathbb{B}}[\nabla f(\widetilde{\mathbf{w}}+\delta_ m u)]\approx \mathbf{E}_{u\sim \mathbb{B}}[\nabla f(\widetilde{\mathbf{w}})+\delta_ m \nabla^2 f(\widetilde{\mathbf{w}})^\top u]=\nabla f(\widetilde{\mathbf{w}})$.
\end{remark}
 Algorithm \ref{alg:a1} may obtain the global minimum of the $(a,c,\sigma)$-nice functions by the properties of $(a,c,\sigma)$-nice and the principle of GOA. Furthermore, the results of our algorithm are stable because the minimum of every local strong convex function is unique in the convex area. Experimental results in Section \ref{sec:experiment_2} confirm our analysis above.

  Let
\begin{equation}\label{eq:sample}
\begin{split}
\v:=\nabla h(\mathbf{w})+\nabla f(\mathbf{w}+\delta u)-\nabla f(\widetilde{\mathbf{w}}+\delta u)+\mathbf{E}_{u\sim \mathbb{B}}[\nabla f(\widetilde{\mathbf{w}}+\delta_ m u)].
\end{split}
\end{equation}

The lemma below states that $\v$ is an unbiased estimate of $\nabla\hat{F}_\delta (\mathbf{w})$.
\begin{lem}\label{lem:unbias}
Let $\mathbf{w}\in \mathbb{R}^d$, $\delta\geq 0$, then Eq.\eqref{eq:sample} is an unbiased estimate for $\nabla\hat{F}_\delta (\mathbf{w})$.
\end{lem}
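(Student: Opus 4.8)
The plan is to show directly that $\mathbf{E}_{u\sim\mathbb{B}}[\v]=\nabla\hat{F}_\delta(\mathbf{w})$, where the expectation is taken over the single random direction $u$ that enters the two ``sampled'' terms of Eq.\eqref{eq:sample}. Note that the first term $\nabla h(\mathbf{w})$ and the last term $\mathbf{E}_{u\sim\mathbb{B}}[\nabla f(\widetilde{\mathbf{w}}+\delta u)]$ are deterministic (the latter is a full expectation that has already been evaluated), so only the middle pair carries any randomness and the whole computation reduces to linearity of expectation plus one cancellation.

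First I would compute the gradient of the Partial $\delta$-smooth objective. By Definition \ref{def:smooth}, $\hat{F}_\delta(\mathbf{w})=h(\mathbf{w})+\mathbf{E}_{u\sim\mathbb{B}}[f(\mathbf{w}+\delta u)]$, so the goal is the identity
\[
\nabla\hat{F}_\delta(\mathbf{w})=\nabla h(\mathbf{w})+\mathbf{E}_{u\sim\mathbb{B}}[\nabla f(\mathbf{w}+\delta u)].
\]
This amounts to interchanging differentiation and expectation in $\nabla\,\mathbf{E}_{u\sim\mathbb{B}}[f(\mathbf{w}+\delta u)]$. I would justify it by dominated convergence: since $u$ ranges over the compact ball $\mathbb{B}$ and $f$ is $L$-smooth, $\nabla f(\mathbf{w}+\delta u)$ is bounded uniformly in $u$ on a neighborhood of $\mathbf{w}$, so the difference quotients defining the gradient are dominated by an integrable function and the limit passes through the expectation.

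Next I would take the expectation of Eq.\eqref{eq:sample} over $u\sim\mathbb{B}$ and apply linearity:
\[
\mathbf{E}_{u\sim\mathbb{B}}[\v]=\nabla h(\mathbf{w})+\mathbf{E}_{u\sim\mathbb{B}}[\nabla f(\mathbf{w}+\delta u)]-\mathbf{E}_{u\sim\mathbb{B}}[\nabla f(\widetilde{\mathbf{w}}+\delta u)]+\mathbf{E}_{u\sim\mathbb{B}}[\nabla f(\widetilde{\mathbf{w}}+\delta u)].
\]
The last two terms are identical and cancel; this is precisely the control-variate structure that SVRG relies on. The one place to be careful is that the smoothing parameter in the sampled term $\nabla f(\widetilde{\mathbf{w}}+\delta u)$ must match the one in the precomputed full expectation, which holds once we take $\delta=\delta_m$ as in Algorithm \ref{alg:a1}. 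What remains is $\nabla h(\mathbf{w})+\mathbf{E}_{u\sim\mathbb{B}}[\nabla f(\mathbf{w}+\delta u)]$, which by the first step equals $\nabla\hat{F}_\delta(\mathbf{w})$, completing the proof.

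The only genuine obstacle is the interchange of gradient and expectation; everything else is linearity together with the cancellation built into the definition of $\v$. I expect the interchange to be immediate under the standing $L$-smoothness assumption, so the lemma is essentially a bookkeeping check that the variance-reduction control variate is centered correctly.
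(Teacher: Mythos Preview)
Your proposal is correct and follows essentially the same approach as the paper's proof: take $\mathbf{E}_{u\sim\mathbb{B}}$ of $\v$, use linearity so that the $\widetilde{\mathbf{w}}$-terms cancel, and identify the remainder with $\nabla\hat{F}_\delta(\mathbf{w})$. Your added justification of the gradient--expectation interchange via dominated convergence is a detail the paper simply assumes without comment.
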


Lemma \ref{lem:unbias} indicates that we can use Eq. \eqref{eq:sample} to estimate $\nabla\hat{F}_\delta (\mathbf{w})$. The following Theorem \ref{prop:variance_reduced} illustrates that the variance of $\v$ is reduced.
\begin{thm}\label{prop:variance_reduced}
$f$ is $L$-smoothness. For $k=1,2,\dots$, let
\begin{equation*}
\begin{split}
\v_k=\nabla h(\mathbf{w}_{k-1})+\nabla f(\mathbf{w}_{k-1}+\delta u)-\nabla f(\widetilde{\mathbf{w}}+\delta u)+\mathbf{E}_{u\sim \mathbb{B}}[\nabla f(\widetilde{\mathbf{w}}+\delta u)],
\end{split}
\end{equation*}
 then the variance of $\v_k$ is bounded. Moreover, if $\mathbf{w}_{k-1}$ and $\widetilde{\mathbf{w}}$ converge to optimum $\mathbf{w}_\ast$, then the variance of $\v_k$ also converges to zero.
\end{thm}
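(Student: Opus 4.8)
The plan is to run the classical SVRG variance-reduction computation, adapted to the uniform-ball-smoothed setting, and to exploit the fact that the \emph{same} realization of $u$ is used in the two stochastic gradient terms of $\v_k$. First I would record that, by Lemma \ref{lem:unbias}, the conditional mean over $u$ is the true smoothed gradient, $\mathbf{E}_{u\sim\mathbb{B}}[\v_k]=\nabla\hat{F}_{\delta}(\mathbf{w}_{k-1})=\nabla h(\mathbf{w}_{k-1})+\mathbf{E}_{u\sim\mathbb{B}}[\nabla f(\mathbf{w}_{k-1}+\delta u)]$. Writing $\mathrm{Var}(\v_k)=\mathbf{E}_{u\sim\mathbb{B}}\|\v_k-\mathbf{E}_{u\sim\mathbb{B}}[\v_k]\|^2$, the deterministic pieces $\nabla h(\mathbf{w}_{k-1})$ and $\widetilde{g}=\mathbf{E}_{u\sim\mathbb{B}}[\nabla f(\widetilde{\mathbf{w}}+\delta u)]$ cancel, so only the randomness coming from $u$ in the difference $X:=\nabla f(\mathbf{w}_{k-1}+\delta u)-\nabla f(\widetilde{\mathbf{w}}+\delta u)$ remains. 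Concretely, $\v_k-\mathbf{E}_{u\sim\mathbb{B}}[\v_k]=X-\mathbf{E}_{u\sim\mathbb{B}}[X]$.

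The key step is then the standard two-line bound. Since variance never exceeds the second moment, $\mathbf{E}_{u\sim\mathbb{B}}\|X-\mathbf{E}_{u\sim\mathbb{B}}[X]\|^2\le\mathbf{E}_{u\sim\mathbb{B}}\|X\|^2$. Now I apply the $L$-smoothness of $f$ to $X$, and here the crucial cancellation occurs: because both terms are evaluated with the identical shift $\delta u$, the offset drops out of the argument difference,
\begin{equation*}
\|\nabla f(\mathbf{w}_{k-1}+\delta u)-\nabla f(\widetilde{\mathbf{w}}+\delta u)\|\le L\|(\mathbf{w}_{k-1}+\delta u)-(\widetilde{\mathbf{w}}+\delta u)\|=L\|\mathbf{w}_{k-1}-\widetilde{\mathbf{w}}\|.
\end{equation*}
This right-hand side is free of $u$, so taking expectation yields the clean estimate $\mathrm{Var}(\v_k)\le L^2\|\mathbf{w}_{k-1}-\widetilde{\mathbf{w}}\|^2$.

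From this single inequality both claims follow immediately. For boundedness, I would note that throughout the inner loop both $\mathbf{w}_{k-1}$ and $\widetilde{\mathbf{w}}$ are confined to the shrunk decision set $\mathcal{C}_m$ (the projection step \ref{cod:prox} guarantees this), so $\|\mathbf{w}_{k-1}-\widetilde{\mathbf{w}}\|$ is at most $\mathrm{diam}(\mathcal{C}_m)$ and the variance is bounded by $L^2\,\mathrm{diam}(\mathcal{C}_m)^2$. For the vanishing statement, the triangle inequality $\|\mathbf{w}_{k-1}-\widetilde{\mathbf{w}}\|^2\le 2\|\mathbf{w}_{k-1}-\mathbf{w}_\ast\|^2+2\|\widetilde{\mathbf{w}}-\mathbf{w}_\ast\|^2$ shows that when both $\mathbf{w}_{k-1}\to\mathbf{w}_\ast$ and $\widetilde{\mathbf{w}}\to\mathbf{w}_\ast$ the bound tends to $0$, forcing $\mathrm{Var}(\v_k)\to 0$.

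I do not anticipate a deep obstacle, since the argument is the smoothed analogue of the Johnson--Zhang SVRG bound; the only point that requires care is the justification that the $\delta u$ perturbation genuinely cancels, which hinges on the algorithm reusing one sampled $u$ in both $\nabla f(\mathbf{w}_{k-1}+\delta u)$ and $\nabla f(\widetilde{\mathbf{w}}+\delta u)$ (step \ref{cod:v_k}). If, instead, independent samples were drawn, this cancellation would fail and the variance would not be self-reducing; so the subtlety worth flagging is that the bound is a property of the coupling of the two terms, and that the $L$-smoothness is applied to $f$ rather than to the smoothed $\hat f_\delta$, which is legitimate because the perturbed arguments differ by exactly $\mathbf{w}_{k-1}-\widetilde{\mathbf{w}}$.
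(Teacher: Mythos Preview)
Your argument is correct and, for the statement as phrased, actually more elementary than the paper's. Both proofs begin identically: after subtracting the mean, the deterministic terms cancel and one is left with $X-\mathbf{E}[X]$ where $X=\nabla f(\mathbf{w}_{k-1}+\delta u)-\nabla f(\widetilde{\mathbf{w}}+\delta u)$, and both then use $\mathbf{E}\|X-\mathbf{E}[X]\|^2\le\mathbf{E}\|X\|^2$. From there the routes diverge. You apply $L$-smoothness of $f$ directly to $X$, exploiting the cancellation of the common shift $\delta u$, and obtain the distance bound $\mathrm{Var}(\v_k)\le L^2\|\mathbf{w}_{k-1}-\widetilde{\mathbf{w}}\|^2$. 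The paper instead inserts $\pm\nabla f(\mathbf{w}_\ast+\delta u)$, splits via $\|a+b\|^2\le 2\|a\|^2+2\|b\|^2$, and invokes an auxiliary lemma (their Lemma~\ref{lem:bound}) to reach the function-value bound
\[
\mathrm{Var}(\v_k)\le 4L\bigl[\hat F_\delta(\mathbf{w}_{k-1})-\hat F_\delta(\mathbf{w}_\ast)+\hat F_\delta(\widetilde{\mathbf{w}})-\hat F_\delta(\mathbf{w}_\ast)\bigr].
\]
Your bound is shorter and suffices for the two qualitative claims of the theorem (bounded, and vanishing as both iterates approach $\mathbf{w}_\ast$). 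The paper's bound costs an extra lemma but is the form that plugs directly into the SVRG convergence recursion of Theorem~\ref{thm:converge}, which is stated in terms of $\hat F_\delta(\cdot)-\hat F_\delta(\mathbf{w}_\ast)$; that is the payoff of their longer route.
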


  Theorem \ref{prop:variance_reduced} implies that for the iterative formula $\mathbf{w}_k=\mathbf{w}_{k-1}-\eta \v_k$, step size $\eta$ can be set as a little larger constant. In contrast, the step size of Suffix-SGD decreases with the number of iteration increasing. So our method may converge faster than Suffix-SGD. Algorithm \ref{alg:a1} gives the full description of our method with constant step size $\eta$ for non-convex $(a,c,\sigma)$-nice functions.

 \subsection{Convergence and Complexity Analyses}
 In this section, we will discuss the convergence and complexity analyses of  Algorithm \ref{alg:a1}. A point $\mathbf{w}$ is called $\varepsilon$-accurate solution, if $\mathbf{E}[F(\mathbf{w})]-F(\mathbf{w}_\ast)\leq\varepsilon$.
   First, we discuss the convergence and complexity of SVRG with project step. Because $\mathbf{w}_{k}$ is the projection of $(\mathbf{w}_{k-1}-\eta \v_k)$ onto $\mathcal{C}$, and $\mathbf{w}_\ast\in \mathcal{C}$, we have that $\|\mathbf{w}_{k}-\mathbf{w}_\ast\|^2\leq\|\mathbf{w}_{k-1}-\eta \v_k-\mathbf{w}_\ast\|^2$. According to the theorem in \cite{johnson2013accelerating} (in \cite{johnson2013accelerating}, let $\psi(\mathbf{w})=h(\mathbf{w})+f(\mathbf{w}+\delta u)$), we have the following theorem:
  \begin{thm}\label{thm:converge}
  Suppose $\hat F_\delta(\mathbf{w})$ is $L$-smoothness and $\sigma$-strong convex on $\mathcal{C}$, and let $\mathbf{w}_\ast=\arg\min_{\mathbf{w}} \hat F_\delta(\mathbf{w})$ be an optimal solution. In addition, assume that $T$ is sufficiently large so that
    \begin{equation}\label{eq:rho}
    \rho=\frac{1}{\sigma \eta(1-2L\eta)T}+\frac{2L\eta}{1-2L\eta}<1.
    \end{equation}
    Then the SVRG with project step in algorithm \ref{alg:a1} has geometric convergence in expectation:
    \begin{equation*}
    \mathbf{E} [\hat F_\delta(\widetilde{\mathbf{w}}_{s})]-\hat F_\delta(\mathbf{w}_\ast)\leq \rho^s[\hat F_\delta(\widetilde{\mathbf{w}}_{0})-\hat F_\delta(\mathbf{w}_\ast)]
    \end{equation*}
  \end{thm}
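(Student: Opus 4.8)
The plan is to follow the classical SVRG analysis of \cite{johnson2013accelerating}, treating the smoothed objective as the expectation $\hat F_\delta(\mathbf{w})=\mathbf{E}_{u\sim\mathbb{B}}[\psi_u(\mathbf{w})]$ of the family of ``component'' functions $\psi_u(\mathbf{w})=h(\mathbf{w})+f(\mathbf{w}+\delta u)$ indexed by the random direction $u$, so that $\v_k$ in step \ref{cod:v_k} is exactly the variance-reduced gradient estimator for this family. Two facts established earlier serve as inputs: Lemma \ref{lem:unbias} gives unbiasedness $\mathbf{E}_u[\v_k\mid \mathbf{w}_{k-1}]=\nabla\hat F_\delta(\mathbf{w}_{k-1})$, and Theorem \ref{prop:variance_reduced} controls the second moment, which I will use in the sharp form $\mathbf{E}\|\v_k\|^2\le 4L\big([\hat F_\delta(\mathbf{w}_{k-1})-\hat F_\delta(\mathbf{w}_\ast)]+[\hat F_\delta(\widetilde{\mathbf{w}})-\hat F_\delta(\mathbf{w}_\ast)]\big)$.

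First I would absorb the projection. Since $\mathcal{C}$ is convex and $\mathbf{w}_\ast\in\mathcal{C}$, the map $\Pi_{\mathcal{C}}$ is non-expansive and fixes $\mathbf{w}_\ast$, whence $\|\mathbf{w}_k-\mathbf{w}_\ast\|^2=\|\Pi_{\mathcal{C}}(\mathbf{w}_{k-1}-\eta\v_k)-\Pi_{\mathcal{C}}(\mathbf{w}_\ast)\|^2\le\|\mathbf{w}_{k-1}-\eta\v_k-\mathbf{w}_\ast\|^2$, exactly the inequality recorded before the statement. The projection therefore only tightens the recursion and the unconstrained argument proceeds verbatim. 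Expanding the right-hand side, taking the conditional expectation over $u$, and using unbiasedness gives
\begin{equation*}
\mathbf{E}\|\mathbf{w}_k-\mathbf{w}_\ast\|^2\le \|\mathbf{w}_{k-1}-\mathbf{w}_\ast\|^2-2\eta(\mathbf{w}_{k-1}-\mathbf{w}_\ast)^\top\nabla\hat F_\delta(\mathbf{w}_{k-1})+\eta^2\mathbf{E}\|\v_k\|^2.
\end{equation*}
Convexity of $\hat F_\delta$ bounds the cross term from below by $\hat F_\delta(\mathbf{w}_{k-1})-\hat F_\delta(\mathbf{w}_\ast)$, and inserting the variance bound yields the one-step descent
\begin{equation*}
\mathbf{E}\|\mathbf{w}_k-\mathbf{w}_\ast\|^2\le \|\mathbf{w}_{k-1}-\mathbf{w}_\ast\|^2-2\eta(1-2L\eta)[\hat F_\delta(\mathbf{w}_{k-1})-\hat F_\delta(\mathbf{w}_\ast)]+4L\eta^2[\hat F_\delta(\widetilde{\mathbf{w}})-\hat F_\delta(\mathbf{w}_\ast)].
\end{equation*}

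Next I would telescope this over $k=1,\dots,T$ within one outer stage $s$, where $\mathbf{w}_0=\widetilde{\mathbf{w}}=\widetilde{\mathbf{w}}_{s-1}$. Dropping the nonnegative term $\mathbf{E}\|\mathbf{w}_T-\mathbf{w}_\ast\|^2$, and using that step \ref{cod:option1} picks $\widetilde{\mathbf{w}}_s$ uniformly so $\mathbf{E}[\hat F_\delta(\widetilde{\mathbf{w}}_s)]-\hat F_\delta(\mathbf{w}_\ast)=\tfrac{1}{T}\sum_{k=0}^{T-1}[\hat F_\delta(\mathbf{w}_k)-\hat F_\delta(\mathbf{w}_\ast)]$, the telescoped distance contributes $\|\widetilde{\mathbf{w}}_{s-1}-\mathbf{w}_\ast\|^2$, which I convert through $\sigma$-strong convexity via $\|\widetilde{\mathbf{w}}_{s-1}-\mathbf{w}_\ast\|^2\le \tfrac{2}{\sigma}[\hat F_\delta(\widetilde{\mathbf{w}}_{s-1})-\hat F_\delta(\mathbf{w}_\ast)]$. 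Collecting the coefficients of $\hat F_\delta(\widetilde{\mathbf{w}}_{s-1})-\hat F_\delta(\mathbf{w}_\ast)$ and dividing by $2\eta(1-2L\eta)T$ reproduces precisely the factor $\rho$ in \eqref{eq:rho}, giving $\mathbf{E}[\hat F_\delta(\widetilde{\mathbf{w}}_s)]-\hat F_\delta(\mathbf{w}_\ast)\le \rho\,[\hat F_\delta(\widetilde{\mathbf{w}}_{s-1})-\hat F_\delta(\mathbf{w}_\ast)]$; iterating over $s$ and invoking the hypothesis $\rho<1$ produces the claimed geometric rate.

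The step I expect to be the main obstacle is establishing the variance bound in the exact form used above, because the component functions $\psi_u$ are built from the non-convex $f$ and need not individually be convex, whereas the textbook estimate $\mathbf{E}\|\v_k\|^2\le 4L[\cdots]$ is normally derived from per-component smoothness \emph{and} convexity. I would discharge this by working on the restricted region where $\hat F_\delta$ is $\sigma$-strongly convex (Theorem \ref{thm:6.1} guarantees this over $\mathcal{C}_m$), invoking Theorem \ref{prop:variance_reduced} together with the $L$-smoothness of each shift $\mathbf{w}\mapsto f(\mathbf{w}+\delta u)$, and measuring the second moment relative to $\nabla\hat F_\delta(\mathbf{w}_\ast)=0$ so that the convexity of the \emph{aggregate} $\hat F_\delta$, rather than of the individual $\psi_u$, carries the argument. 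This is the only place where the structure of Partial $\delta$-smooth, as opposed to a plain finite sum, genuinely enters the analysis.
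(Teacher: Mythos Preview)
Your proposal is correct and matches the paper's approach exactly: the paper's own ``proof'' consists of the single observation $\|\mathbf{w}_k-\mathbf{w}_\ast\|^2\le\|\mathbf{w}_{k-1}-\eta\v_k-\mathbf{w}_\ast\|^2$ together with a direct citation of the SVRG theorem in \cite{johnson2013accelerating} applied to the component functions $\psi_u(\mathbf{w})=h(\mathbf{w})+f(\mathbf{w}+\delta u)$, which is precisely the route you lay out in detail. The concern you flag about the second-moment bound when the $\psi_u$ are not individually convex is handled in the paper by Lemma~\ref{lem:bound}, whose proof uses only $L$-smoothness of $f$, optimality of $\mathbf{w}_\ast$, and convexity of $h$ (hence of the aggregate $\hat F_\delta$), exactly as you anticipate.
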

\begin{remark}\label{rem:converge}
In Eq. \eqref{eq:rho}, denote $\theta=\eta L$ and $T=100(L/\sigma)$. Then when $0<\theta<2/9$, we obtain that $\rho<1$. For example, let $\theta=0.2$, then $\rho=3/4<1$.
\end{remark}

\begin{cor}
In order to have an $\varepsilon$-accurate solution, the number of stages $s$ needs to satisfy
 \begin{equation*}
 s\geq\left(\log {\frac{1}{\rho}}\right)^{-1}\log\left({\frac{\hat F_\delta(\widetilde{\mathbf{w}}_0)-\hat F_\delta(\mathbf{w}_\ast)}{\varepsilon}}\right).
 \end{equation*}
\end{cor}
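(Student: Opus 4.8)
The plan is to read the required stage count directly off the geometric convergence rate established in Theorem~\ref{thm:converge}. Writing $\Delta := \hat F_\delta(\widetilde{\mathbf{w}}_0) - \hat F_\delta(\mathbf{w}_\ast)$ for the initial suboptimality gap, that theorem supplies
\[
\mathbf{E}[\hat F_\delta(\widetilde{\mathbf{w}}_s)] - \hat F_\delta(\mathbf{w}_\ast) \leq \rho^s \Delta .
\]
By the definition of an $\varepsilon$-accurate solution recalled at the start of this subsection (applied to the surrogate objective $\hat F_\delta$ currently being minimized), it suffices to choose $s$ large enough that the right-hand side does not exceed $\varepsilon$; that is, it suffices to enforce the sufficient condition $\rho^s \Delta \leq \varepsilon$.

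Next I would rearrange this condition and solve for $s$. Since the hypotheses of Theorem~\ref{thm:converge}, together with Remark~\ref{rem:converge}, guarantee $0<\rho<1$, the inequality $\rho^s \leq \varepsilon/\Delta$ is equivalent to $(1/\rho)^s \geq \Delta/\varepsilon$. Taking the natural logarithm of both sides and using $\log(1/\rho) = -\log\rho > 0$, we may divide by this positive quantity without reversing the inequality, obtaining
\[
s \geq \frac{\log(\Delta/\varepsilon)}{\log(1/\rho)} = \left(\log\frac{1}{\rho}\right)^{-1}\log\left(\frac{\hat F_\delta(\widetilde{\mathbf{w}}_0) - \hat F_\delta(\mathbf{w}_\ast)}{\varepsilon}\right),
\]
which is precisely the claimed bound.

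Since this is essentially a one-line consequence of geometric decay, I do not expect any substantial obstacle. The only point that demands care is the sign of $\log\rho$: because $\rho<1$ its logarithm is negative, so one must either divide by the positive quantity $\log(1/\rho)$ as above, or remember to reverse the inequality when dividing through by $\log\rho$ directly. It is also worth recording that $\Delta>0$ in the nontrivial case (otherwise $\widetilde{\mathbf{w}}_0$ is already optimal and any $s$ suffices), so that $\log(\Delta/\varepsilon)$ is well defined, and that the stated $s$ is a \emph{sufficient} stage count guaranteeing the target accuracy rather than merely a necessary one.
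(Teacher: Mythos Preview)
Your proposal is correct and is precisely the intended derivation: the paper states this corollary without proof, treating it as the immediate logarithmic inversion of the geometric bound in Theorem~\ref{thm:converge}, which is exactly what you carry out. Your additional remarks on the sign of $\log\rho$ and the nontriviality of $\Delta$ are sound and make explicit what the paper leaves tacit.
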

Following Theorem gives the complexity of Algorithm \ref{alg:a1}:
\begin{thm}
Let $\mathcal{C}$ be a convex set, $M=\sqrt{{1}/{\varepsilon}}>1$, $0<c<1$, $\hat F(\cdot)$ be an $L$-smooth $(a,c,\sigma)$-nice function, then after $O(1/\varepsilon)$ optimization steps, Algorithm \ref{alg:a1} outputs a point $\mathbf{w}_{m+1}$ which is $\varepsilon$-accurate.
\end{thm}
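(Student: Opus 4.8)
The plan is to argue by induction on the stage index $m$, using Theorem \ref{thm:6.1} as the warm-start invariant, Theorem \ref{thm:converge} for the per-stage progress, and the $(a,c,\sigma)$-nice property to pass from one stage to the next; the global accuracy and the step count then follow by summing over $m=1,\dots,M$. First I would verify the per-stage guarantee. By Theorem \ref{thm:6.1}, $\hat F_{\delta_m}$ is $\sigma$-strongly convex over $\mathcal{C}_m$ and its minimizer $\mathbf{w}_m^\ast$ lies in $\mathcal{C}_m$, so the inner loop of Algorithm \ref{alg:a1} is exactly a projected SVRG on an $L$-smooth, $\sigma$-strongly convex objective. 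Rewriting $\rho$ of Eq.\eqref{eq:rho} as $1/\rho=\frac{\sigma\eta(1-2L\eta)T}{1+2L\eta^2\sigma T}$ shows that the value $S$ set in step \ref{cod:s} is precisely the count from the Corollary with target $\varepsilon_m$ and initial gap $\Delta F_m$; hence after the $S$ SVRG stages we obtain $\mathbf{E}[\hat F_{\delta_m}(\mathbf{w}_{m+1})]-\hat F_{\delta_m}(\mathbf{w}_m^\ast)\le\varepsilon_m=\frac{\sigma c^2\delta_m^2}{8}$. Strong convexity then converts this function-value gap into a distance bound, $\frac{\sigma}{2}\mathbf{E}\|\mathbf{w}_{m+1}-\mathbf{w}_m^\ast\|^2\le\varepsilon_m$, and Jensen's inequality gives $\mathbf{E}\|\mathbf{w}_{m+1}-\mathbf{w}_m^\ast\|\le\frac{c\delta_m}{2}$.

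Next I would close the induction. Writing $\mathbf{w}_{m+1}^\ast$ for the minimizer of the finer smoothing $\hat F_{\delta_{m+1}}=\hat F_{c\delta_m}$, the $(a,c,\sigma)$-nice property supplies the shift bound $\|\mathbf{w}_m^\ast-\mathbf{w}_{m+1}^\ast\|\le c\delta_m$. Combining this with the previous paragraph through the triangle inequality yields $\mathbf{E}\|\mathbf{w}_{m+1}-\mathbf{w}_{m+1}^\ast\|\le\frac{c\delta_m}{2}+c\delta_m=\tfrac{3}{2}c\delta_m=1.5\,\delta_{m+1}$, which is exactly the radius used to form $\mathcal{C}_{m+1}$. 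This re-establishes the hypothesis of Theorem \ref{thm:6.1} at stage $m+1$ and, incidentally, is where the relaxed requirement $r\ge1.5$ (rather than $r=3$) is consumed. Iterating, the warm-start invariant holds for every $m\le M$.

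For the final accuracy I would set $m=M$ and use $\delta_{M+1}=c^M\delta_1$. The Taylor estimate from the Remark after Definition \ref{def:smooth}, $|\hat F_{\delta_M}(\mathbf{w})-F(\mathbf{w})|=O(\delta_M^2)$, together with $\hat F_{\delta_M}(\mathbf{w}_M^\ast)\le\hat F_{\delta_M}(\mathbf{w}_\ast)$ and the per-stage guarantee, gives $\mathbf{E}[F(\mathbf{w}_{M+1})]-F(\mathbf{w}_\ast)=O(\delta_M^2)+\varepsilon_M=O(\delta_M^2)$; since $\delta_M$ decays geometrically in $M$ while $M=\sqrt{1/\varepsilon}$, this quantity falls below $\varepsilon$, so $\mathbf{w}_{M+1}$ is $\varepsilon$-accurate.

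Finally I would count the work. The total number of optimization steps is $T\sum_{m=1}^M S_m$. Using $\Delta F_m\le D:=\max_{\mathbf{w}\in\mathcal{C}}F(\mathbf{w})-F(\mathbf{w}_\ast)$ and $\varepsilon_m=\frac{\sigma c^2\delta_1^2}{8}\,c^{2(m-1)}$, one gets $\log(\Delta F_m/\varepsilon_m)=O(m)$, hence $S_m=\big(\log\frac{1}{\rho}\big)^{-1}\log(\Delta F_m/\varepsilon_m)=O(m)$ and $\sum_{m=1}^M S_m=O(M^2)$. With $T=100(L/\sigma)$ held constant (Remark \ref{rem:converge}) and $M=\sqrt{1/\varepsilon}$, the total is $O(M^2)=O(1/\varepsilon)$, as claimed. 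The main obstacle is the in-expectation chaining across stages: Theorem \ref{thm:converge} controls the progress only in expectation, whereas the containment $\mathbf{w}_{m+1}^\ast\in\mathcal{C}_{m+1}$ that underlies Theorem \ref{thm:6.1} is a pointwise requirement, so one must treat $\mathbf{w}_m^\ast$ as effectively deterministic (the minimizer of $\hat F_{\delta_m}$ is unique on the strongly convex region) and propagate the per-stage distance bounds through the tower property so that the smoothing errors do not accumulate across the $M$ stages.
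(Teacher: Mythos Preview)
Your complexity-counting argument is essentially the same as the paper's: both expand $\varepsilon_m=\frac{\sigma c^2}{8}c^{2(m-1)}\delta_1^2$, observe that $\log(\Delta F_m/\varepsilon_m)$ is an affine function of $m$, sum to $O(M^2)$, and substitute $M=\sqrt{1/\varepsilon}$. The paper carries this out explicitly with a constant numerator $\hat F_\delta(\widetilde{\mathbf w}_0)-\hat F_\delta(\mathbf w_\ast)$ in place of your uniform bound $D$, arriving at $T_{\text{total}}\le(\log\frac{1}{\rho})^{-1}(\log\widetilde\Gamma)\,M^2$; the arithmetic is the same.

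Where you differ is in scope. The paper's proof of this theorem does \emph{only} the step count: it neither re-derives the warm-start invariant nor argues that $\mathbf w_{M+1}$ is $\varepsilon$-accurate. Your first two paragraphs are, in effect, a re-proof of Theorem~\ref{thm:6.1} (compare Appendix~\ref{pro:lem6.1}, which runs the same induction with the same $\frac{\delta_{m+1}}{2}+\delta_{m+1}\le 1.5\,\delta_{m+1}$ computation), so that part is redundant but correct. Your third paragraph, using the $O(\delta_M^2)$ smoothing error to transfer the guarantee from $\hat F_{\delta_M}$ to $F$, supplies an argument the paper's proof simply omits; this is a genuine addition on your part, not a deviation. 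The in-expectation versus pointwise containment issue you flag at the end is real and is glossed over in the paper as well---neither proof fully resolves it---so your caveat is well placed rather than a defect of your approach.
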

\begin{proof}
Let $T_{total}$ be the total number of steps made by Algorithm \ref{alg:a1}, then we have:

  \begin{equation*}
  \begin{split}
  T_{total}&\leq \sum\limits_{m=1}^M \left(\log {\frac{1}{\rho}}\right)^{-1}\log{\frac{\hat F_\delta(\widetilde{\mathbf{w}}_0)-\hat F_\delta(\mathbf{w}_\ast)}{\varepsilon_m}}\\
   &=\sum\limits_{m=1}^M \left(\log {\frac{1}{\rho}}\right)^{-1}\log{\frac{\hat F_\delta(\widetilde{\mathbf{w}}_0)-\hat F_\delta(\mathbf{w}_\ast)}{\sigma c^2\delta_m^2/8}}\\
   &= \left(\log {\frac{1}{\rho}}\right)^{-1}\sum\limits_{m=1}^M\log{\frac{[\hat F_\delta(\widetilde{\mathbf{w}}_0)-\hat F_\delta(\mathbf{w}_\ast)]\cdot8}{\sigma\cdot(c^{m-1}\delta_1)^2}}\\
   &= \left(\log {\frac{1}{\rho}}\right)^{-1}[M\log\widetilde{\Gamma}-2\log c\sum\limits_{m=1}^M{(m-1)}]\\
  &= \left(\log {\frac{1}{\rho}}\right)^{-1}[M\log\widetilde{\Gamma}+(\log\frac{1}{c})(M^2-M)]\\
   &= \left(\log {\frac{1}{\rho}}\right)^{-1}[(\log\frac{1}{c})M^2+(\log{c\widetilde{\Gamma}})M]\\
  &\leq \left(\log {\frac{1}{\rho}}\right)^{-1}(\log{\widetilde{\Gamma}})M^2=\left(\log {\frac{1}{\rho}}\right)^{-1}(\log{\widetilde{\Gamma}})\frac{1}{\varepsilon}
  \end{split}
  \end{equation*}
  For simplicity, we set $\widetilde{\Gamma}=\frac{[\hat F_\delta(\widetilde{\mathbf{w}}_0)-\hat F_\delta(\mathbf{w}_\ast)]\cdot8}{\sigma \delta_1^2}$ in third equation. The last inequality holds because $M\leq M^2$ as $M\geq1$.
\end{proof}
 The complexity of GradOpt is $O(1/\varepsilon^2)$ \cite{hazan2016graduated}, whereas that of our SVRG-GOA is $O(1/\varepsilon)$. So our proposed algorithm has lower complexity than GradOpt.
\section{Graduated Optimization Algorithm with Prox-SVRG}
Proximal gradient method is another effective method to solve a sum of a convex function and a nonconvex function. Prox-SVRG was proposed by L. Xiao \cite{lin2014}. It has lower complexity comparing with proximal full gradient and proximal SGD. Based on Prox-SVRG, we present Algorithm \ref{alg:2}: PSVRG-GOA.

Comparing with SVRG-GOA in Section \ref{sec:svrg-goa}, PSVRG-GOA does not need computing the gradient of $h(\mathbf{w})$. In Algorithm \ref{alg:2}, the proximal mapping is defined as:
$$\text{prox}_h(\nu)=\mathop{\arg\min}_{\nu\in \mathbb{R}^d}\left\{\frac{1}{2}\|\mathbf{w}-\nu\|^2+h(\mathbf{w})\right\}.$$
If $h(\mathbf{w})=\frac{\lambda}{2}\|\mathbf{w}\|^2$, by simple computation, we obtain $\text{prox}_{\eta h}(\nu)=\frac{1}{1+\lambda \eta}\nu$.

The convergence analysis of Prox-SVRG from \cite{lin2014} and $\|\mathbf{w}_{k}-\mathbf{w}_\ast\|^2\leq\|\mathbf{w}_{k-1}-\eta \v_k-\mathbf{w}_\ast\|^2$ guarantees the convergence of the Prox-SVRG with project step in Algorithm \ref{alg:2}:

 \begin{algorithm}[htp]
\caption{\textbf{\emph{PSVRG-GOA}} $-$ \emph{\textbf{G}raduated \textbf{O}ptimization \textbf{A}lgorithm with \textbf{P}roximal \textbf{SVRG}}}\label{alg:2}
\hspace*{0.02in} {\bf Input:} 
target error $\varepsilon>0$, step size $0<\eta<1$, decision set $\mathcal{C}$, iterative number $T$, shrink factor $0<c<1$ and $M=\sqrt{1/\varepsilon}\geq1$. Choose initial point $\mathbf{w}_1\in\mathcal{C}$ uniformly at random. Set $\delta_1=\text{diam}(\mathcal{C})$.\\
\hspace*{0.02in} {\bf Output:} 
$\mathbf{w}_{m+1}$
\begin{algorithmic}[1]
%
%
\FOR{$m=1$ to $M$}
    \STATE Set $\varepsilon_m:=\frac{\sigma c^2\delta_m^2}8$, $S=\left(\log{\frac{\sigma \eta(1-4L\eta)T}{1+4L\sigma \eta^2(T+1)}}\right)^{-1}\log{\frac{\Delta F_m}{\varepsilon_m}},$ with $\Delta F_m=F(\mathbf{w}_m)-F(\mathbf{w}_\ast)$;
    \STATE Set shrunk decision set $\mathcal{C}_m:=\mathcal{C}\cap\mathbb{B}(\mathbf{w}_m,1.5\delta_m)$;
    \STATE $\widetilde{\mathbf{w}}_0=\mathbf{w}_m$; \texttt{//Perform Prox-SVRG over $\hat f_{\delta_ m}$}
    \FOR{$s=1$ to $S$}  \label{cod:begin_SVRG2}
           \STATE $\widetilde{\mathbf{w}}=\widetilde{\mathbf{w}}_{s-1}$
           \STATE $\widetilde{g}=\nabla f(\widetilde{\mathbf{w}})$\label{cod:full_gradient2}
           \STATE $\mathbf{w}_0=\widetilde{\mathbf{w}}$
            \FOR {$k=1$ to $T$ }
               \STATE    Randomly pick $u\in\mathcal{C}_m$ and update
               \STATE    $\v_k=\nabla f(\mathbf{w}_{k-1}+\delta_ m u)-\nabla f(\widetilde{\mathbf{w}}+\delta_ m u)+\widetilde{g}$\label{cod:v_k2}
               \STATE    $\mathbf{w}_k=\Pi_{\mathcal{C}_m}[\text{prox}_{\eta h}(\mathbf{w}_{k-1}-\eta \v_k)]$ \label{cod:prox2}
            \ENDFOR
            \STATE  set $\widetilde{\mathbf{w}}_{s}=\mathbf{w}_k$ for randomly chosen $k\in\{0,\dots,T-1\}$\label{cod:option12}
      \ENDFOR\label{cod:finish_SVRG2}
      \STATE $\mathbf{w}_{m+1}=\widetilde{\mathbf{w}}_{s}$
      \STATE $\delta_{m+1}=c\delta_{m}$
\ENDFOR
\end{algorithmic}
\end{algorithm}

\begin{thm}\label{thm:converge}
  Suppose $\hat F_\delta(\mathbf{w})$ is $L$-smooth and $\sigma$-strong convex on $\mathcal{C}$, and let $\mathbf{w}_\ast=\arg\min_{\mathbf{w}} \hat F_\delta(\mathbf{w})$ be an optimal solution. In addition, assume that $T$ is sufficiently large so that
    \begin{equation}
    \rho=\frac{1}{\sigma \eta(1-4L\eta)T}+\frac{4L\eta(T+1)}{(1-4L\eta)T}<1.
    \end{equation}
    Then the Prox-SVRG with project step in algorithm \ref{alg:a1} has geometric convergence in expectation:
    \begin{equation}
    \mathbf{E} [\hat F_\delta(\widetilde{\mathbf{w}}_{s})]-\hat F_\delta(\mathbf{w}_\ast)\leq \rho^s[\hat F_\delta(\widetilde{\mathbf{w}}_{0})-\hat F_\delta(\mathbf{w}_\ast)]
    \end{equation}
  \end{thm}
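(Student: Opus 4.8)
The plan is to reduce the statement to the Prox-SVRG convergence theorem of Xiao and Zhang \cite{lin2014}, applied to the smoothed objective $\hat F_\delta(\mathbf{w})=h(\mathbf{w})+\hat f_\delta(\mathbf{w})$, and then to check that inserting the projection $\Pi_{\mathcal{C}_m}$ after the proximal step does not spoil the argument. First I would line the iteration up with the Prox-SVRG template: the simple convex part handled by $\text{prox}_{\eta h}$ is the regularizer $h$, while the smooth part is $\hat f_\delta(\mathbf{w})=\mathbf{E}_{u\sim\mathbb{B}}[f(\mathbf{w}+\delta u)]$, playing the role of the finite-sum average. Since $f$ is $L$-smooth, each translate $\mathbf{w}\mapsto f(\mathbf{w}+\delta u)$ is $L$-smooth, so $\hat f_\delta$ is $L$-smooth, and by hypothesis $\hat F_\delta$ is $L$-smooth and $\sigma$-strongly convex on $\mathcal{C}$. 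Lemma \ref{lem:unbias} gives that $\v_k$ is an unbiased estimate of $\nabla\hat f_\delta(\mathbf{w}_{k-1})$, which is exactly the unbiasedness the analysis requires.

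The core analytic input is the variance bound. Building on Theorem \ref{prop:variance_reduced} and the Corollary~3 type estimate in \cite{lin2014}, I would establish
\[
\mathbf{E}\|\v_k-\nabla\hat f_\delta(\mathbf{w}_{k-1})\|^2\leq 4L\big[\hat F_\delta(\mathbf{w}_{k-1})-\hat F_\delta(\mathbf{w}_\ast)+\hat F_\delta(\widetilde{\mathbf{w}})-\hat F_\delta(\mathbf{w}_\ast)\big],
\]
using $L$-smoothness together with the fact that $\mathbf{w}_\ast$ is a common minimizer of the smooth and composite objectives. This is the quantity that produces the $4L\eta$ factors in $\rho$.

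Next I would carry out the one-step contraction. Introducing the proximal-gradient mapping $\mathbf{g}_k=\tfrac{1}{\eta}(\mathbf{w}_{k-1}-\mathbf{w}_k)$ and the error $\Delta_k=\v_k-\nabla\hat f_\delta(\mathbf{w}_{k-1})$, the subgradient optimality condition for the prox step yields a per-iteration inequality of the form
\[
\mathbf{E}\|\mathbf{w}_k-\mathbf{w}_\ast\|^2\leq\|\mathbf{w}_{k-1}-\mathbf{w}_\ast\|^2-2\eta(1-4L\eta)\,\mathbf{E}\big[\hat F_\delta(\mathbf{w}_k)-\hat F_\delta(\mathbf{w}_\ast)\big]+4L\eta^2\big[\hat F_\delta(\widetilde{\mathbf{w}})-\hat F_\delta(\mathbf{w}_\ast)\big].
\]
Here the projection enters: since $\mathbf{w}_\ast\in\mathcal{C}_m$ and $\Pi_{\mathcal{C}_m}$ is nonexpansive with $\mathbf{w}_\ast$ as a fixed point, the inequality $\|\mathbf{w}_k-\mathbf{w}_\ast\|^2\leq\|\mathbf{w}_{k-1}-\eta\v_k-\mathbf{w}_\ast\|^2$ stated before the algorithm allows me to replace the projected iterate by its pre-projection value wherever a distance to $\mathbf{w}_\ast$ appears, so the distance recursion survives. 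Summing over the inner loop $k=1,\dots,T$ telescopes the distance terms (the shift between the $\hat F_\delta(\mathbf{w}_{k-1})$ term in the variance bound and the $\hat F_\delta(\mathbf{w}_k)$ terms produces the $(T+1)$ in $\rho$), averaging over the randomly chosen $k$ identifies the output $\widetilde{\mathbf{w}}_s$, and the $\sigma$-strong convexity bound $\|\widetilde{\mathbf{w}}_{s-1}-\mathbf{w}_\ast\|^2\leq\tfrac{2}{\sigma}[\hat F_\delta(\widetilde{\mathbf{w}}_{s-1})-\hat F_\delta(\mathbf{w}_\ast)]$ converts the leftover distance into a function gap. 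Collecting constants gives the single-epoch contraction with factor $\rho=\tfrac{1}{\sigma\eta(1-4L\eta)T}+\tfrac{4L\eta(T+1)}{(1-4L\eta)T}$, and iterating over $s$ epochs yields the claimed geometric rate.

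The main obstacle I anticipate is precisely the interaction of the projection with the function-value terms. Nonexpansiveness automatically protects the distance terms $\|\mathbf{w}_k-\mathbf{w}_\ast\|^2$, but the gap terms $\hat F_\delta(\mathbf{w}_k)-\hat F_\delta(\mathbf{w}_\ast)$ rest on the prox optimality condition for the un-projected iterate, whereas $\mathbf{w}_k=\Pi_{\mathcal{C}_m}[\text{prox}_{\eta h}(\mathbf{w}_{k-1}-\eta\v_k)]$ is a composition of two proximal-type maps rather than a single $\text{prox}_{\eta(h+\iota_{\mathcal{C}_m})}$. Making the one-step inequality genuinely hold on the restricted set $\mathcal{C}_m$ — and keeping the two $4L\eta$ contributions (from the step-size penalty $1-4L\eta$ and from the $4L\eta^2$ reference term) assembling into the exact form of $\rho$ — is the delicate part that requires separate justification beyond the cited Xiao–Zhang analysis.
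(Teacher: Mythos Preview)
Your proposal is correct and follows exactly the paper's approach: the paper's entire argument for this theorem is the one-sentence observation preceding it, namely that the Xiao--Zhang Prox-SVRG convergence analysis \cite{lin2014} carries over once one notes $\|\mathbf{w}_{k}-\mathbf{w}_\ast\|^2\leq\|\mathbf{w}_{k-1}-\eta \v_k-\mathbf{w}_\ast\|^2$ via the nonexpansiveness of $\Pi_{\mathcal{C}_m}$ with $\mathbf{w}_\ast\in\mathcal{C}_m$. You in fact supply considerably more detail than the paper does, and the obstacle you flag---that the composed map $\Pi_{\mathcal{C}_m}\circ\text{prox}_{\eta h}$ is not literally $\text{prox}_{\eta(h+\iota_{\mathcal{C}_m})}$, so the function-value terms need separate care---is a genuine subtlety the paper simply does not address.
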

  Following theorem gives the complexity of Algorithm \ref{alg:2}:
\begin{thm}
Let $\mathcal{C}$ is a convex set, $\hat F$ be an $L$-smooth $(a,c,\sigma)$-nice function, then after $O(\log(1/\varepsilon))$ optimization steps, Algorithm 2 outputs a point $\mathbf{w}_{m+1}$ which is $\varepsilon$-accurate.
\end{thm}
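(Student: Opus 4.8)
The plan is to reuse the two-level structure of the complexity proof for Algorithm~\ref{alg:a1}, but to extract the sharper logarithmic rate by (i) counting the outer graduated stages tightly rather than fixing $M=\sqrt{1/\varepsilon}$, and (ii) showing that each inner Prox-SVRG solve terminates in a \emph{constant} number of stages. For the outer count, since $\delta_{m+1}=c\delta_m$ and $\delta_1=\text{diam}(\mathcal{C})$ we have $\delta_m=c^{m-1}\delta_1$, and by the Taylor argument in the remark after Definition~\ref{def:smooth} the Partial $\delta$-smoothing bias obeys $|\hat F_{\delta_m}(\mathbf{w})-F(\mathbf{w})|=O(\delta_m^2)$. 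Demanding that this bias be at most $\varepsilon$ at the last stage gives $c^{2(M-1)}\delta_1^2=O(\varepsilon)$, i.e. $M=O\!\big(\log(1/\varepsilon)/\log(1/c)\big)=O(\log(1/\varepsilon))$ outer stages suffice.

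Next I would show the inner cost per stage is $O(1)$. At stage $m$, Prox-SVRG is warm-started at $\widetilde{\mathbf{w}}_0=\mathbf{w}_m$ and run to accuracy $\varepsilon_m=\sigma c^2\delta_m^2/8$, which by the Prox-SVRG geometric-convergence theorem above needs $S_m=(\log(1/\rho))^{-1}\log(G_m/\varepsilon_m)$ stages, with initial gap $G_m=\hat F_{\delta_m}(\mathbf{w}_m)-\hat F_{\delta_m}(\mathbf{w}_m^\ast)$. Theorem~\ref{thm:6.1} already guarantees $\mathbf{w}_m^\ast\in\mathcal{C}_m=\mathcal{C}\cap\mathbb{B}(\mathbf{w}_m,1.5\delta_m)$, hence $\|\mathbf{w}_m-\mathbf{w}_m^\ast\|\le1.5\delta_m$; since $\mathbf{w}_m^\ast$ is an interior minimizer, $L$-smoothness of $\hat F_{\delta_m}$ then yields $G_m\le\frac{L}{2}(1.5\delta_m)^2=O(\delta_m^2)$. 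As $\varepsilon_m=\Theta(\delta_m^2)$ as well, the ratio $G_m/\varepsilon_m$ is bounded by a constant depending only on $L,\sigma,c$, so $S_m=(\log(1/\rho))^{-1}\log O(1)=O(1)$.

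Putting the two pieces together, the total number of optimization steps is $\sum_{m=1}^{M}S_m=M\cdot O(1)=O(\log(1/\varepsilon))$. To conclude $\varepsilon$-accuracy of the output I would bound $F(\mathbf{w}_{M+1})-F(\mathbf{w}_\ast)$ by splitting it into the optimization error on $\hat F_{\delta_M}$—at most $\varepsilon_M=\Theta(\delta_M^2)$, which the $\sigma$-strong convexity of Theorem~\ref{thm:6.1} also converts into $\|\mathbf{w}_{M+1}-\mathbf{w}_M^\ast\|\le c\delta_M/2$—plus the smoothing bias $O(\delta_M^2)$ that separates $\hat F_{\delta_M}$ from $F$ and $\mathbf{w}_M^\ast$ from $\mathbf{w}_\ast$; with $\delta_M^2=O(\varepsilon)$ both terms are $O(\varepsilon)$.

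The hard part will be twofold. First, the remark's estimate $|\hat F_{\delta_m}-F|=O(\delta_m^2)$ is only a second-order Taylor heuristic, so making it a genuine upper bound—controlling the remainder uniformly over $\mathcal{C}_m$ using $L$-smoothness and the second-moment identity $\mathbf{E}_{u\sim\mathbb{B}}[uu^\top]=\varrho^2 I$—is the step that actually fixes both the outer count $M$ and the final accuracy. Second, the $O(1)$ per-stage bound hinges on the \emph{uniform} tracking estimate $\|\mathbf{w}_m-\mathbf{w}_m^\ast\|\le1.5\delta_m$ holding at every $m$; this is precisely where the improvement over Algorithm~\ref{alg:a1} lives, since that analysis instead treated the numerator as a fixed constant and therefore paid $\sum_m S_m=O(M^2)=O(1/\varepsilon)$. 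Verifying that $G_m$ shrinks in lockstep with $\varepsilon_m$ for all $m$, so that no residual $m$-dependence creeps into $S_m$, is the crux of obtaining the logarithmic rate.
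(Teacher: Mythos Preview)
The paper does not actually supply a proof of this theorem; the only complexity argument written out is the one for Algorithm~\ref{alg:a1}, which yields $O(1/\varepsilon)$. So there is nothing in the paper to align with or diverge from, and your proposal is in effect an attempt to justify a claim the authors left unproved.

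That said, there is a structural problem you should confront head-on rather than quietly fix. Algorithm~2, exactly as specified in the paper, sets $M=\sqrt{1/\varepsilon}$ in its input line. The outer loop therefore already executes $\sqrt{1/\varepsilon}$ iterations, and no inner analysis can bring the total below that. The $O(\log(1/\varepsilon))$ claim is thus incompatible with the algorithm as written. Your step~(i), ``counting the outer graduated stages tightly rather than fixing $M=\sqrt{1/\varepsilon}$,'' is not a tighter analysis of the stated algorithm---it is a change to the algorithm. You should say so explicitly: the theorem can only hold for a variant of Algorithm~2 in which $M$ is chosen as $\Theta(\log(1/\varepsilon))$, and that is the object you are actually analyzing.

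With that caveat, your two main ingredients are sound and are genuinely sharper than anything the paper uses. The smoothing-bias bound $|\hat F_{\delta}(\mathbf{w})-F(\mathbf{w})|\le \tfrac{L}{2}\delta^2$ follows rigorously from $L$-smoothness of $f$ and $\mathbf{E}_{u\sim\mathbb{B}}[u]=0$ (no Taylor heuristic needed), which settles your first ``hard part.'' The warm-start observation that $G_m=\hat F_{\delta_m}(\mathbf{w}_m)-\hat F_{\delta_m}(\mathbf{w}_m^\ast)=O(\delta_m^2)$ while $\varepsilon_m=\Theta(\delta_m^2)$, hence $S_m=O(1)$, is exactly the refinement the paper's Algorithm~1 proof misses: there the initial gap is treated as a fixed constant $\widetilde{\Gamma}$, which is why the sum $\sum_m S_m$ blows up to $O(M^2)$. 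Your tracking of $G_m$ via Theorem~\ref{thm:6.1} and $L$-smoothness is the right mechanism; just be careful that the bound $G_m\le \tfrac{L}{2}\|\mathbf{w}_m-\mathbf{w}_m^\ast\|^2$ requires $\nabla\hat F_{\delta_m}(\mathbf{w}_m^\ast)=0$, i.e.\ that $\mathbf{w}_m^\ast$ is an unconstrained (interior) minimizer over $\mathcal{C}$, which is implicit in the $(a,c,\sigma)$-nice definition but worth stating.

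In short: your route is the only plausible way to reach $O(\log(1/\varepsilon))$, but it proves a statement about a modified algorithm, and the paper itself offers neither a proof nor a consistent algorithm specification to compare against.
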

\section{Extensions}
In this section, we give two extensions for Algorithm \ref{alg:a1} and \ref{alg:2}.

\textbf{Limit-Sum of Non-convex Function:}
 In machine learning, we often encounter the problem $$F(\mathbf{w})=h(\mathbf{w})+\frac{1}{n}\sum\limits_{i=1}^n f_i(\mathbf{w}),$$ where $f_i(\mathbf{w})$ is non-convex for $i=1,\cdots,n$. To deal with such optimization problem, we need to replace $\nabla f(\cdot)$  with $\frac{1}{n}\sum\limits_{i=1}^n\nabla f_{i}(\cdot)$ in Step \ref{cod:v_k} in our Algorithms. It is no doubt that the calculation is mass. To simplify the calculation, we randomly pick $i$ from $\{1,\ldots, n\}$, and then calculate $\nabla f_i(\cdot)$ instead of $\nabla f(\cdot)$ in Step \ref{cod:v_k}. This method is reasonable and practical since $\mathbf{E}[\nabla f_{i}(\cdot)]=\nabla f(\cdot)$, and can greatly reduce the computation cost.

\textbf{Mini-batch:}
In this section, we give an extension for Algorithm \ref{alg:a1} and \ref{alg:2} using mini-batch strategy.
 Mini-batching is a popular strategy in distributed and multicore setting since it helps to increase the parallelism and to further reduce the variance of the stochastic gradient method. For simplicity, we only give the key differences between the mini-batch SVRG-GOA/PSVRG-GOA and SVRG-GOA/PSVRG-GOA. To apply mini-batch strategy, we replace line \ref{cod:random} and \ref{cod:v_k} in Algorithms \ref{alg:a1} and \ref{alg:2} with the following updates:
 \begin{enumerate}
 \item Randomly pick $U\in \mathcal{C}_m$ such that $|U|=b$, and update
 \item $\v_k=\nabla h(\mathbf{w}_{k-1})+\frac{1}{b}\sum_{u\in U}[\nabla f(\mathbf{w}_{k-1}+\delta_ m u)-\nabla f(\widetilde{\mathbf{w}}+\delta_ m u)+\widetilde{g}]$ (for SVRG-GOA).
 \item $\v_k=\frac{1}{b}\sum_{u\in U}[\nabla f(\mathbf{w}_{k-1}+\delta_ m u)-\nabla f(\widetilde{\mathbf{w}}+\delta_ m u)]+\widetilde{g}$ (for PSVRG-GOA).
 \end{enumerate}
 When $b=1$, mini-batch SVRG-GOA and mini-batch PSVRG-GOA are changed into Algorithm \ref{alg:a1} and \ref{alg:2}. Mini-batch strategy can increase the parallelism of the algorithms. In our experiments, we main discuss serial computing performance of the proposed algorithms.
\section{Experiments}\label{sec:experiments}
To illustrate the influence factors of the new methods, and to compare the performances of SVRG-GOA and PSVRG-GOA with several related algorithms, we present some results of numerical experiments. In every figure, $x$-axis is the number of effective passes over the data, where each effective pass performs one outer loop of the algorithms, and the total number of inner loops for all the compared algorithms are set the same. Each experiment is repeated many times independently, and the data in figures are the average results.

We focus on the differentiable robust least squares support vector machine \cite{chen2017} for binary classification: given a set of training examples $(\mathbf{x}_1,y_1),\dots,(\mathbf{x}_n,y_n)$, where $\mathbf{x}_i\in\mathbb{R}^d$ and $y_i\in\{+1,-1\}$, we find the optimal predictor $\mathbf{w}\in\mathbb{R}^d$ by solving
\begin{equation*}
\min\limits_{\mathbf{w}\in\mathbb{R}^d} \frac{\lambda}{2}\|\mathbf{w}\|^2+\frac{1}{n}\sum\limits_{i=1}^nL(\xi_i),
\end{equation*}
where
\begin{equation*}
\begin{split}
L\left(\xi_i\right)= \frac{1}{2}[\xi_i^2-\max\{0,\xi_i^2-\tau^2\}]-\frac{1}{2p}\log(1+\exp(-p|\xi_i^2-\tau^2|)),
\end{split}
\end{equation*}
$\tau$ is the truncation parameter, and $\xi_i=y_i-\mathbf{w}^\top \mathbf{x}_i$.

In terms of the form of model \eqref{eq:problem} with $f(\mathbf{w})=\frac{1}{n}\sum_{i=1}^n f_i(\mathbf{w})$, we have
$$h(\mathbf{w})=\frac{\lambda}{2}\|\mathbf{w}\|^2,~~~~~f_i(\mathbf{w})=L(\xi_i).$$
\subsection{Influence Factors of SVRG-GOA}
In this section, we discuss some factors which influence the performance of the proposed algorithms. Experimental results in Section \ref{sec:experiment_2} are shown that PSVRG-GOA and SVRG-GOA always have the same performances, so we only discuss influence factors of SVRG-GOA.
\subsubsection{Step Sizes $\eta$}
Fig. \ref{fig:bc_new_eta} shows the performance of SVRG-GOA with different step sizes $\eta$. It can be seen from the Fig. \ref{fig:bc_new_eta} that the convergence speed of SVRG-GOA becomes slow if $\eta$ is set too small. Furthermore, when $\eta$ is set too large, SVRG-GOA may converge to a local optimum. So step size cannot be set too large or too small. In this experiment, $\eta=0.2$ is the best choice. In practice, $\eta$ can be chosen according to Remark \ref{rem:converge}, that is $0<\eta<\frac{2}{9L}$.
\begin{figure}[htp]
\centering
\includegraphics[width=0.7\textwidth]{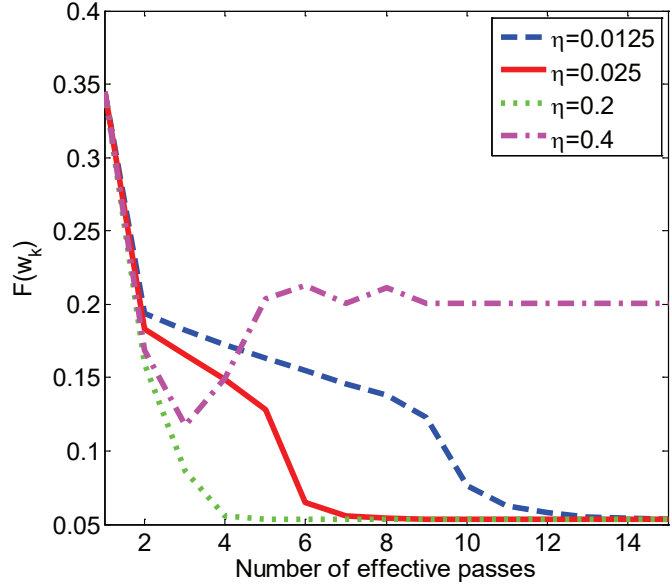}
\caption{SVRG-GOA on breast cancer data set: varying step size $\eta$. The best $\eta\in(0, \frac{2}{9L})$. }\label{fig:bc_new_eta}
\end{figure}
\subsubsection{Initial smoothing factor $\delta$}
We vary the initial value of smoothing factor $\delta$ for GradOpt and SVRG-GOA on breast cancer data set. Fig. \ref{fig:delta1} illustrates how $F(\mathbf{w}_k)$ decreases as the increasing of the number of effective passes. In general, SVRG-GOA converges faster than GradOpt for any choices of $\delta$. Furthermore, if $\delta$ is set too large, algorithms convergence slowly for both GradOpt and SVRG-GOA. GradOpt may even not converge to the global optimum when $\delta$ is set too small. In this experiment, $\delta=1$ is a good choice. In practice, $\delta$ can be chosen by cross-validation technique.

\begin{figure}[htp]
\centering
\includegraphics[width=0.7\textwidth]{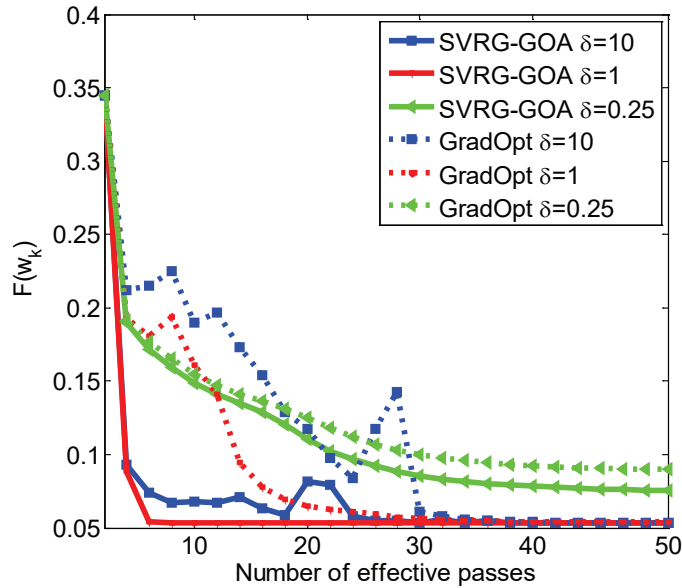}
\caption{Comparison of GradOpt and SVRG-GOA on breast cancer data set with $\eta=0.1$ : vary the initial value of $\delta$.}
\label{fig:delta1}
\end{figure}
\subsection{Comparison with Related Algorithms}\label{sec:experiment_2}
In order to illustrate the performances and properties of our methods, we compare the following algorithms:
\begin{itemize}
\item Nonconvex proxSVRG: the nonconvex proximal SVRG given in \cite{reddi2016proximal} with batch size $b=1$. The step size of it is a constant. This algorithm is denoted as Prox-SVRG in figures for simplicity.
\item GradOpt: graduated optimization with a gradient oracle in \cite{hazan2016graduated}. It uses Suffix-SGD \cite{rakhlin2011making} in the algorithm, and the step size ($\eta_k=1/\sigma k$) is reduced with the increasing of the iteration number $k$.
\item SVRG-GOA: the method proposed in Algorithm \ref{alg:a1} in this paper.
\item PSVRG-GOA: the method proposed in Algorithm 2 in this paper using proximal SVRG.
\end{itemize}

We use publicly available data sets. Their sizes $n$, dimensions $d$ as well as sources are listed in Table \ref{tab:1}. Table \ref{tab:1} also lists the values of $\lambda$ and $\tau$ that were used in our experiments. These choices are typical in machine learning benchmarks to obtain good performance.
\begin{table}[htp]
\centering
\caption{Detailed information of data sets, regularization parameter $\lambda$ and truncation parameter $\tau$ used in our experiments}
\begin{tabular}{cccccc}
\hline
Data Sets&$n$&$d$&Source&$\lambda$&$\tau$\\
\hline
breast cancer&683&10&\cite{lib}&$10^{-3}$&0.9\\
\hline
covtype&581,012&54&\cite{lib}&$10^{-6}$&0.9\\
\hline
sido0&12,678&4,932&\cite{Guyon2008}&$10^{-3}$&0.9\\
\hline
svmguide1&7,089&4&\cite{lib}&$10^{-3}$&0.9\\
\hline
IJCNN1&141,691&22&\cite{lib}&$10^{-3}$&2.5\\
\hline
adult&48,842&123&\cite{lib}&$10^{-5}$&1.5\\
\hline
\label{tab:1}
\end{tabular}
\end{table}

\begin{figure}[htp]
\centering
\subfloat[Breast cancer]{\label{fig:bc_error}\includegraphics[width=0.32\textwidth]{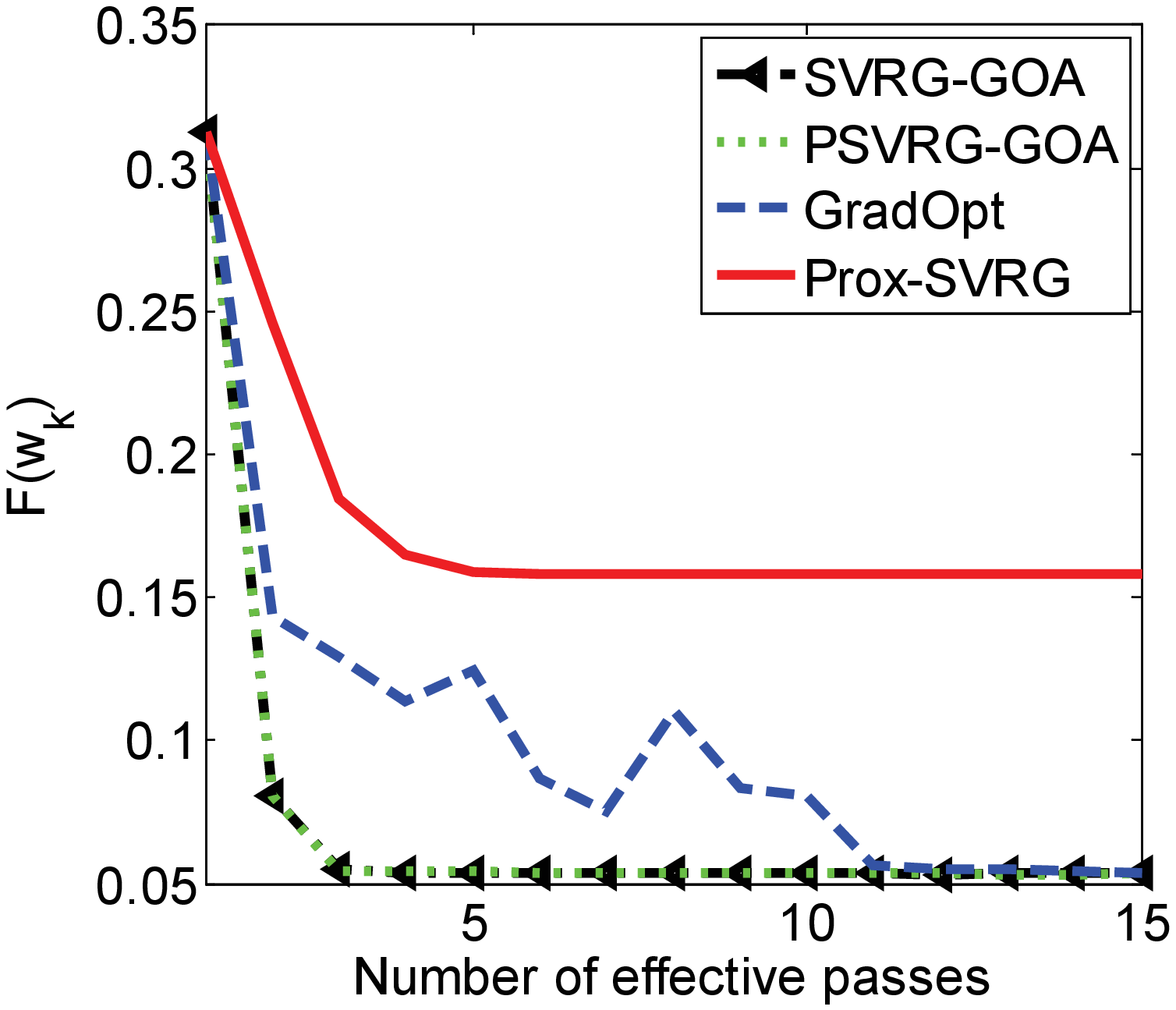}}
\subfloat[Covtype]{\label{fig:cov_error}\includegraphics[width=0.32\textwidth]{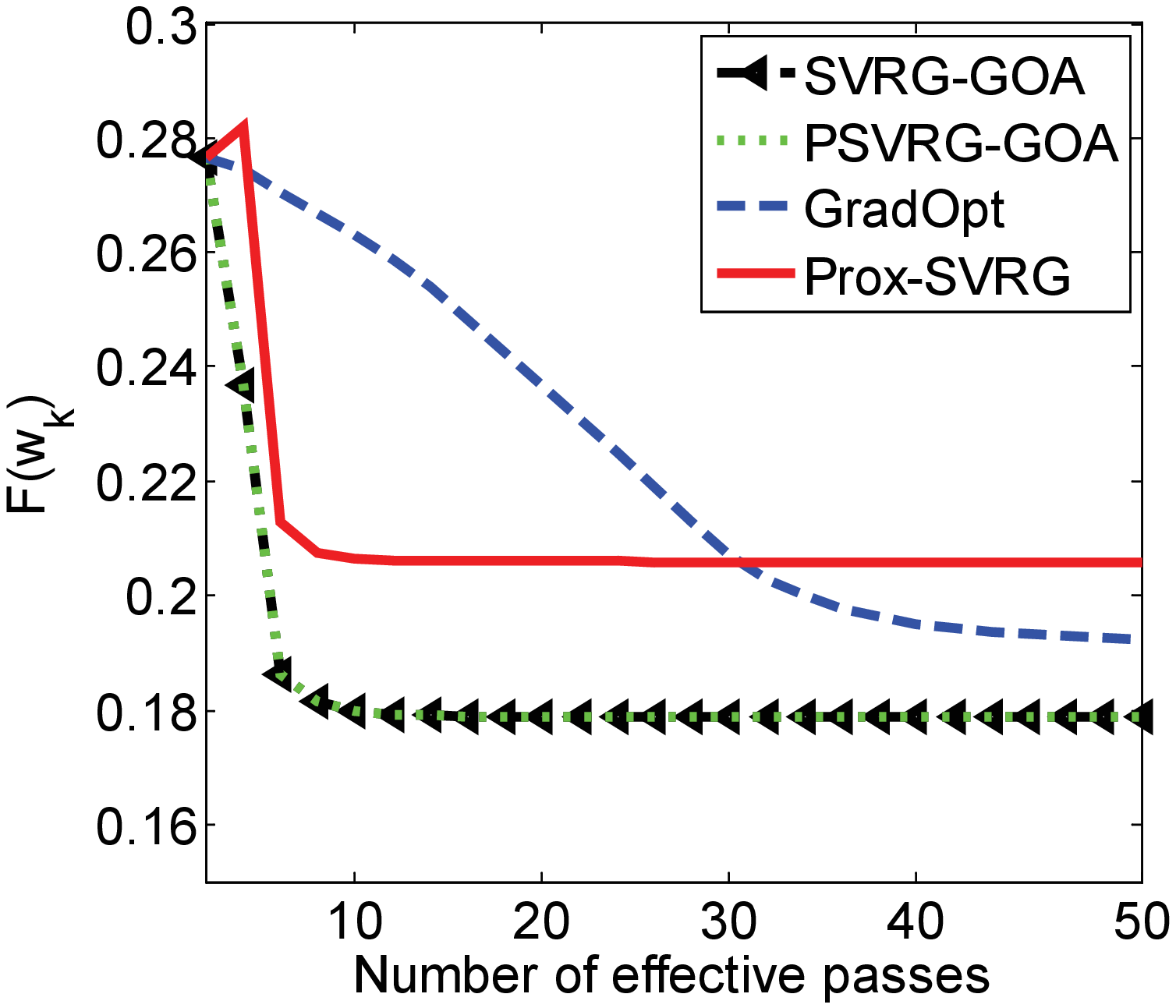}}
\subfloat[Sido0]{\label{fig:sido_error}\includegraphics[width=0.32\textwidth]{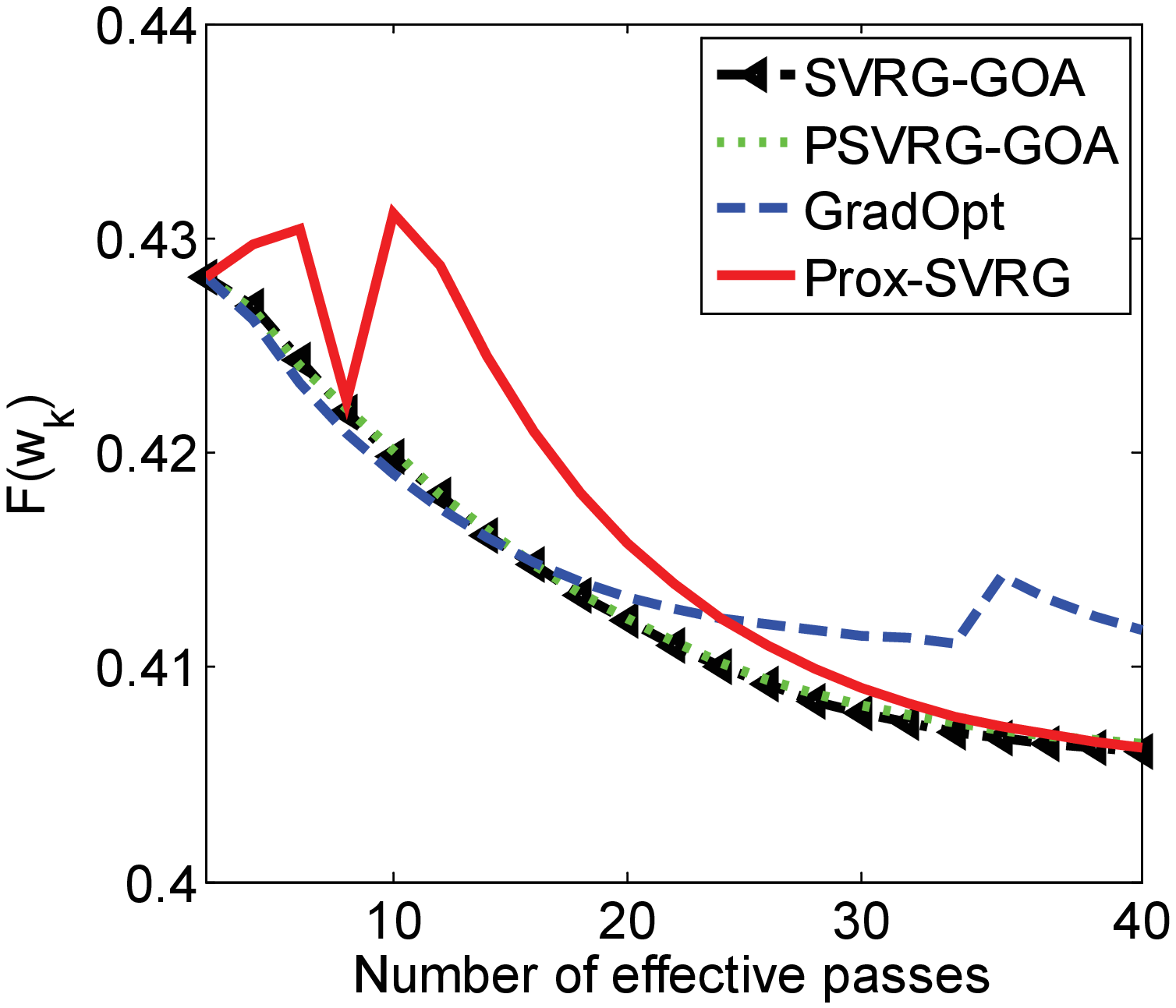}}\\
\subfloat[SVMguide1]{\label{fig:svmguide1_error}\includegraphics[width=0.32\textwidth]{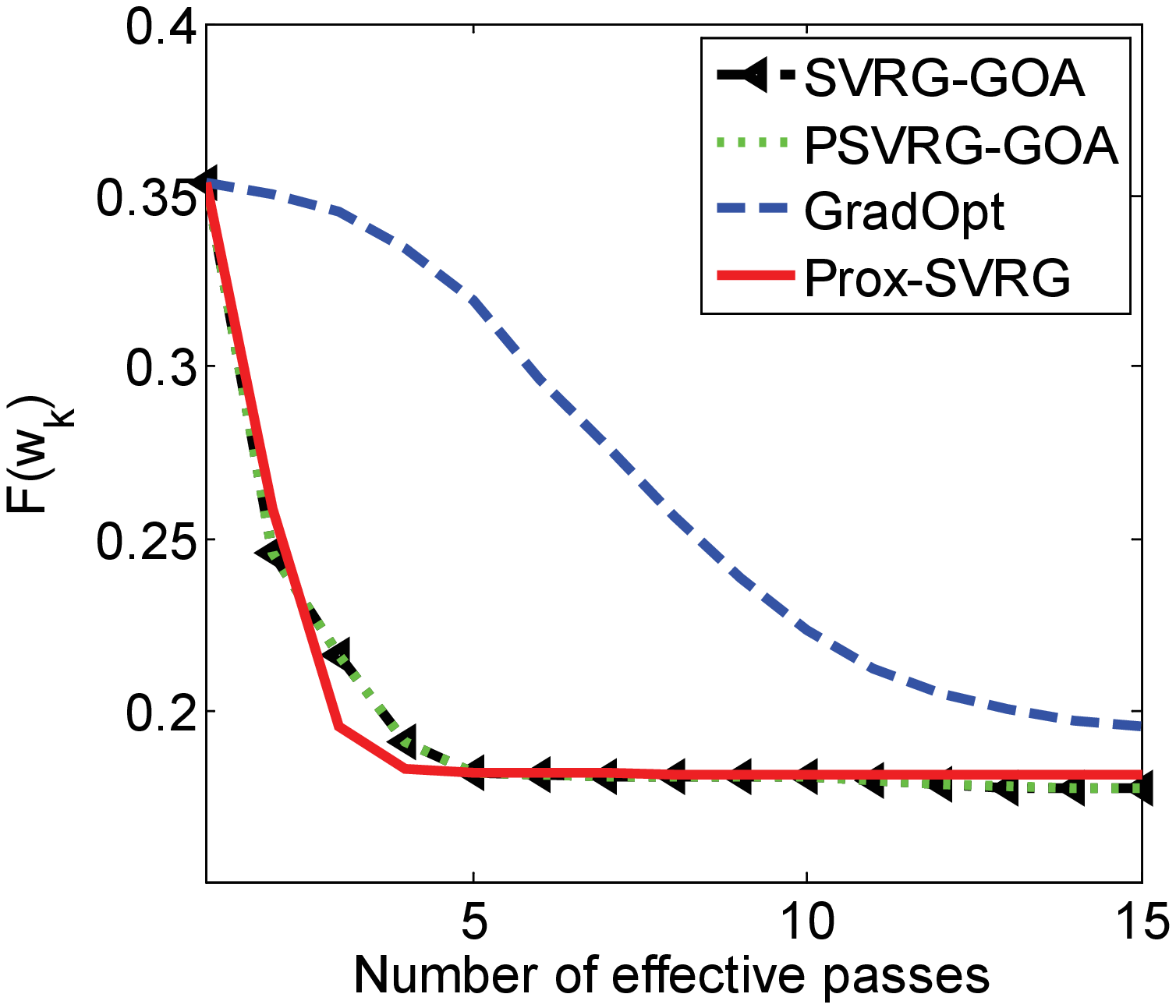}}
\subfloat[IJCNN1]{\label{fig:IJCNN1_error}\includegraphics[width=0.32\textwidth]{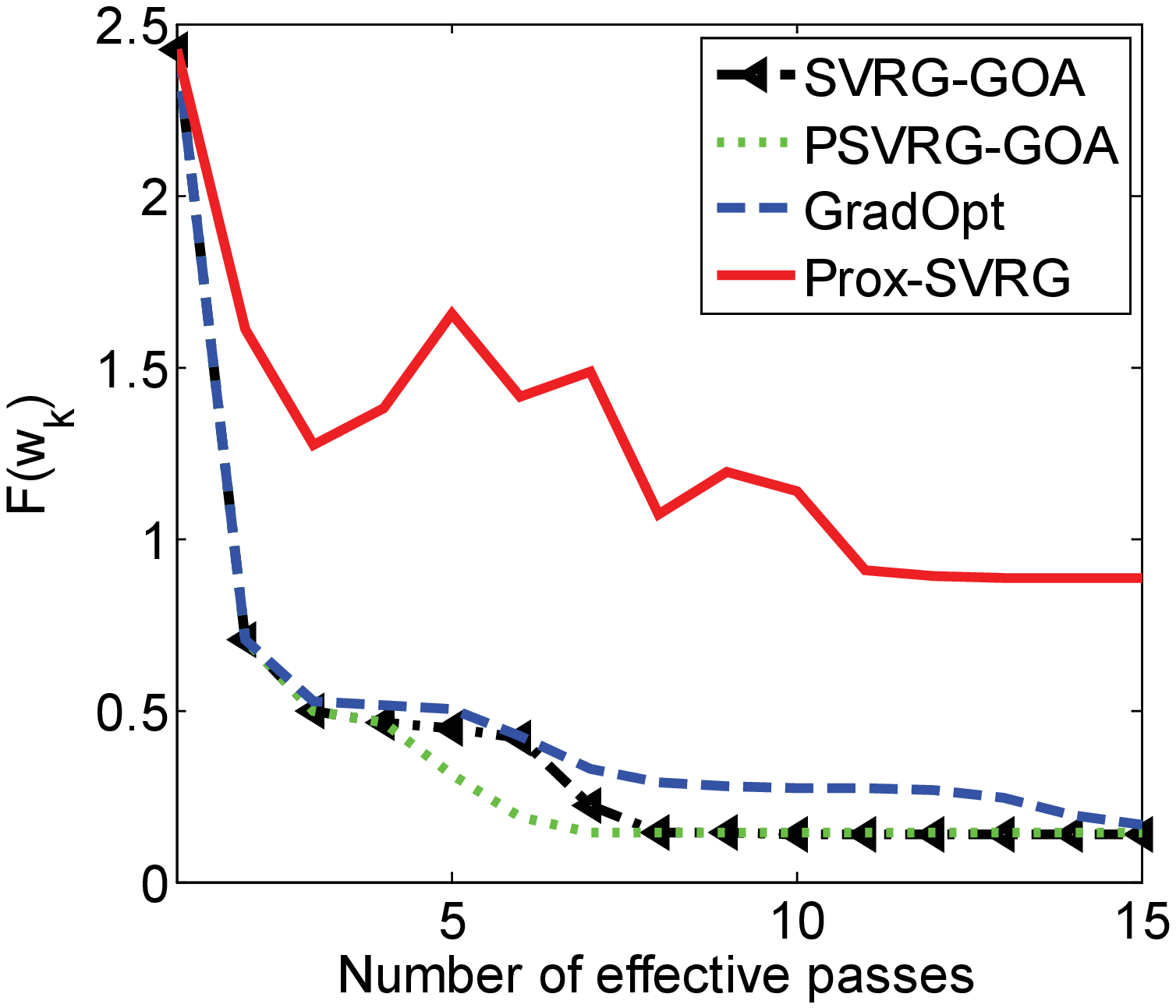}}
\subfloat[Adult]{\label{fig:adult_error}\includegraphics[width=0.32\textwidth]{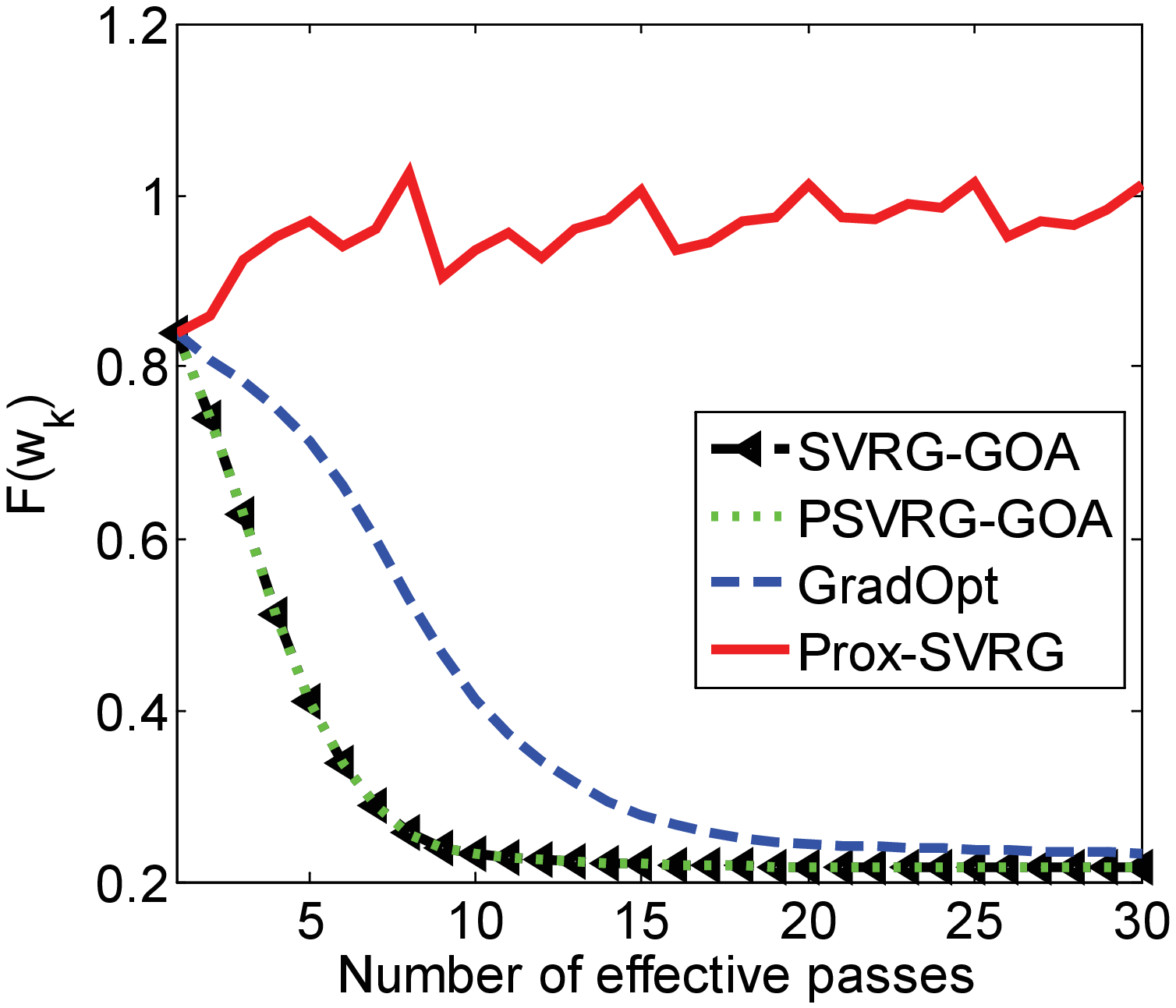}}
\caption{Comparison of different methods. Prox-SVRG is the nonconvex ProxSVRG method with batch size $b=1$.}
\label{fig:error}
\end{figure}

Fig. \ref{fig:error} shows the comparison of different methods on the data sets listed in Table \ref{tab:1}. We set the initial approximate parameter $\delta=1$, which decayed by shrinkage factor $c=0.9$ for SVRG-GOA, PSVRG-GOA and GradOpt algorithms. Step size was set as $\eta=0.2$, which matches our theoretical analysis (see the Remark \ref{rem:converge}) for SVRG-GOA, PSVRG-GOA, and nonconvex proxSVRG. It can be seen that SVRG-GOA and PSVRG-GOA are both able to converge to the optimum and the convergence speed is faster than other methods. These two methods always have the similar performances. The superior performances of SVRG-GOA and PSVRG-GOA are due to their low complexity. In most cases, GradOpt may converge to the optimum, but the convergence speed is slower than SVRG-GOA and PSVRG-GOA.
Nonconvex ProxSVRG is a convergent algorithm for nonconvex optimization problems. But it sometimes converges to the global optimums, and sometimes not, because nonconvex ProxSVRG is a stochastic method. We did a large number of experiments for nonconvex ProxSVRG on breast cancer data set, and then counted the number of converging to global optimums. The probability of converging to the global optimum is approximately 80.12\% in this experiment.
 In contrast, the performances of SVRG-GOA, PSVRG-GOA, and GradOpt are stable because, in each iteration, these three methods deal with a local strong convex optimization problem whose optimum is unique in $\mathcal{C}_m$.
\section{Discussion}
In this paper, we present two algorithms based on graduate optimization to obtain the global optimum of a family of non-convex optimization. In the new algorithms, (proximal) stochastic variance reduction gradient technique with project step, new convex approximate and a new shrinkage factor are applied. We prove that our algorithms have lower complexity than GradOpt. Furthermore, we extend our analysis to mini-batch variants and to solving non-convex finite-sum problem. Experimental results show that our methods perform better than GradOpt and nonconvex Prox-SVRG.

However, there are also some interesting questions that remain to study:
\begin{itemize}
\item How to obtain the global optimum of other non-convex optimization problems? Can the analysis in this paper extend to other non-convex problems besides the $(a,c,\sigma)$-nice functions?
\item The convergence rates of our methods are both $O(1/\varepsilon)$. Are there second-order or other methods which can further accelerate convergence rate?
\end{itemize}
\subsection*{Acknowledgments}
We would like to acknowledge the support of National Natural Science Foundation of China(NNSFC) under Grant No.71301067; and the Fundamental Research Funds for the Central Universities under Grant No. JB150718.
\bibliographystyle{abbrv}
\bibliography{SVRG}
\begin{appendix}
\section{Proof of Lemma \ref{lem:unbias}}
\begin{proof}
By the linearity of expectation, we have
   \begin{equation*}\label{eq:unbias}
   \begin{split}
\mathbf{E}_{u\sim \mathbb{B}}(v)&=\mathbf{E}_{u\sim \mathbb{B}}[\nabla h(\w)+\nabla f(\w+\delta u)-\nabla f(\widetilde{\w}+\delta u)+\mathbf{E}_{u\sim \mathbb{B}}[\nabla f(\widetilde{\w}+\delta u)]\\
&=\nabla h(\w)+\mathbf{E}_{u\sim \mathbb{B}}[\nabla f(\w+\delta u)]-\mathbf{E}_{u\sim \mathbb{B}}\left[\nabla f(\widetilde{\w}+\delta u)+\mathbf{E}_{u\sim \mathbb{B}}\left[\nabla f(\widetilde{\w}+\delta u)\right]\right]\\
&=\nabla h(\w)+\mathbf{E}_{u\sim \mathbb{B}}[\nabla f(\w+\delta u)]=\nabla\hat{F}_\delta (\w).
   \end{split}\end{equation*}
\end{proof}
\section{Proof of Theorem \ref{prop:variance_reduced}}
\begin{proof}
First, we give a lemma which will be used in the following proof.
\begin{lem}\label{lem:bound}
Let $w_\ast$ be the optimum point of $\hat F_\delta(w)=h(w)+\hat f_\delta(w)$, $h(w)$ is convex and $f$ is $L$-smoothness, $u\sim\mathbb{B}$. We have
\begin{equation*}
\mathbf{E}\|\nabla f(w+\delta u)-\nabla f(w_\ast+\delta u)\|^2\leq 2L[\hat F_\delta(w)-\hat F_\delta(w_\ast)].
\end{equation*}
\end{lem}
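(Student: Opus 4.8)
The plan is to adapt the standard variance-reduction bound of Johnson and Zhang \cite{johnson2013accelerating} to the ball-smoothing setting, where the role of the finite-sum index is played by the smoothing direction $u$. First I would fix $u\sim\mathbb{B}$ and introduce the shifted function $\phi_u(w):=f(w+\delta u)$; since $f$ is $L$-smooth, each $\phi_u$ is $L$-smooth with the same constant, because translating the argument does not change the Lipschitz modulus of the gradient. The single-direction inequality I would establish is the co-coercivity-type bound
\[
\|\nabla\phi_u(w)-\nabla\phi_u(w_\ast)\|^2\leq 2L\big[\phi_u(w)-\phi_u(w_\ast)-\nabla\phi_u(w_\ast)^\top(w-w_\ast)\big],
\]
obtained by applying the $L$-smooth descent lemma to the auxiliary function $g_u(w):=\phi_u(w)-\phi_u(w_\ast)-\nabla\phi_u(w_\ast)^\top(w-w_\ast)$, whose gradient vanishes at $w_\ast$, evaluating it at the gradient step $w-\tfrac1L\nabla g_u(w)$, and using $g_u(w_\ast)=0$.

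Next I would take the expectation of this inequality over $u\sim\mathbb{B}$. On the left this reproduces exactly the quantity $\mathbf{E}_{u\sim\mathbb{B}}\|\nabla f(w+\delta u)-\nabla f(w_\ast+\delta u)\|^2$ to be bounded. On the right, Definition \ref{def:smooth} gives $\mathbf{E}_{u\sim\mathbb{B}}[\phi_u(w)]=\hat f_\delta(w)$ and $\mathbf{E}_{u\sim\mathbb{B}}[\phi_u(w_\ast)]=\hat f_\delta(w_\ast)$, while the linear term collapses to $\nabla\hat f_\delta(w_\ast)^\top(w-w_\ast)$ since expectation and gradient commute, exactly as used in Lemma \ref{lem:unbias}.

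The concluding step exploits the optimality of $w_\ast$. As $w_\ast$ minimizes $\hat F_\delta=h+\hat f_\delta$, the stationarity condition $\nabla\hat F_\delta(w_\ast)=0$ yields $\nabla\hat f_\delta(w_\ast)=-\nabla h(w_\ast)$, so the linear correction becomes $+\nabla h(w_\ast)^\top(w-w_\ast)$. Convexity of $h$ bounds this by $h(w)-h(w_\ast)$, and combining it with $\hat f_\delta(w)-\hat f_\delta(w_\ast)$ reassembles $\hat F_\delta(w)-\hat F_\delta(w_\ast)$, delivering the claimed bound $2L[\hat F_\delta(w)-\hat F_\delta(w_\ast)]$.

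The main obstacle I anticipate is justifying the per-direction co-coercivity inequality: in its usual form it requires each $\phi_u$ to be convex, so that $w_\ast$ is the global minimizer of $g_u$ and $g_u\geq 0$, whereas $f$ is only assumed $L$-smooth and only the averaged $\hat F_\delta$ is guaranteed strongly convex on the shrunk set $\mathcal{C}_m$. I would resolve this by restricting the argument to the locally convex region $\mathcal{C}_m$ in which Algorithm \ref{alg:a1} actually operates, where $\phi_u$ may be treated as convex and both $w$ and $w_\ast$ lie; the bookkeeping of the linear term and the clean use of $\nabla\hat F_\delta(w_\ast)=0$ together with convexity of $h$ are the places demanding the most care.
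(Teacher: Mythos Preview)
Your approach is essentially identical to the paper's: the paper defines the same auxiliary function $\varphi(\w)=f(\w+\delta u)-f(\w_\ast+\delta u)-\nabla f(\w_\ast+\delta u)^\top(\w-\w_\ast)$, applies the $L$-smooth descent inequality (their Lemma~\ref{lem:smooth}) to obtain the per-direction co-coercivity bound, takes expectation over $u$, and then uses optimality of $\w_\ast$ together with convexity of $h$ (via a subgradient $\xi_\ast\in\partial h(\w_\ast)$) to absorb the linear term into $\hat F_\delta(\w)-\hat F_\delta(\w_\ast)$. The obstacle you flag---that concluding $\min_\w g_u(\w)=g_u(\w_\ast)=0$ from $\nabla g_u(\w_\ast)=0$ tacitly requires each $\phi_u$ to be convex---is genuine, and the paper does not address it either: it simply asserts ``Hence $\min_\w\varphi(\w)=\varphi(\w_\ast)=0$'' without further justification.
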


Now we prove the Theorem.
   \begin{eqnarray*}
&&\mathbf{E}_{u\sim \mathbb{B}}\|\v_k-\mathbf{E}_{u\sim \mathbb{B}}(\v_k)\|^2\\
   &=&\mathbf{E}_{u\sim \mathbb{B}}\|\nabla f(\w_{k-1}+\delta u)-\nabla f(\widetilde{\w}+\delta u)+\mathbf{E}_{u\sim \mathbb{B}}[\nabla f(\widetilde{\w}+\delta u)-\nabla f(\w_{k-1}+\delta u)]\|^2\\
   &=&\mathbf{E}_{u\sim \mathbb{B}}\|\nabla f(\w_{k-1}+\delta u)-\nabla f(\widetilde{\w}+\delta u)\|^2-\|\mathbf{E}_{u\sim \mathbb{B}}[\nabla f(\widetilde{\w}+\delta u)-\nabla f(\w_{k-1}+\delta u)]\|^2\\
   &\leq&\mathbf{E}_{u\sim \mathbb{B}}\|\nabla f(\w_{k-1}+\delta u)-\nabla f(\widetilde{\w}+\delta u)\|^2\\
   &=&\mathbf{E}_{u\sim \mathbb{B}}\|\nabla f(\w_{k-1}+\delta u)-\nabla f({\w_\ast}+\delta u)+\nabla f({\w_\ast}+\delta u)-\nabla f(\widetilde{\w}+\delta u)\|^2\\
   &\leq&\mathbf{E}_{u\sim \mathbb{B}}[2\|\nabla f(\w_{k-1}+\delta u)-\nabla f({\w_\ast}+\delta u)\|^2+2\|\nabla f({\w_\ast}+\delta u)-\nabla f(\widetilde{\w}+\delta u)\|^2]\\
   &\leq &4L[\hat f_\delta(\w_{k-1})-\hat f_\delta(\w_{\ast})+\hat f_\delta(\widetilde{\w})-\hat f_\delta(\w_{\ast})]\\
   &\leq &4L[\hat F_\delta(\w_{k-1})-\hat F_\delta(\w_{\ast})+\hat F_\delta(\widetilde{\w})-\hat F_\delta(\w_{\ast})]
   \end{eqnarray*}
The fourth equality uses $\mathbf{E}\|\xi-\mathbf{E}\xi\|^2=\mathbf{E}\|\xi\|^2-\|\mathbf{E}\xi\|^2\leq \mathbf{E}\|\xi\|^2$ for any random vector $\xi$. The second inequality uses $\|a+b\|^2\leq2\|a\|^2+2\|b\|^2$. The last inequality uses the Lemma \ref{lem:bound} twice.

Therefore, when both $\w_{k-1}$ and $\widetilde{\w}$ converge to $\w_\ast$, then the variance of $\v_k$ also converge to zero.
\end{proof}
\section{Proof of Lemma \ref{thm:6.1}}\label{pro:lem6.1}
\begin{proof}
We prove this Lemma by induction. Firstly, let us prove that the lemma holds for $m=1$. Note that $\delta_1=\text{diam}(\mathcal{C})$, therefore $\mathcal{C}_1=\mathcal{C}$, and also $\w_1^\ast\in\mathcal{C}_1$. Recall that $(a,c,\sigma)$-nice of $f$ implies that $\hat{\delta_1}$ is $\sigma$-strong convex in $\mathcal{C}$. Secondly, assume that lemma holds for $m>1$. By this assumption, $\hat f_{\delta_m}$ is $\sigma$-strong convex in $\mathcal{C}_m$, and also $\w_m^\ast\in \mathcal{C}_m$. The $\sigma$-strong convexity in $\mathcal{C}_m$ implies,
\begin{equation*}
\begin{split}
\|\w_{m+1}-\w_{m+1}^\ast\|\leq \sqrt{\frac{2}{\sigma}}\sqrt{\hat F_{\delta_m}(\w_{m+1})-\hat F_{\delta_m}(\w_{m}^\ast)}\leq\frac{\delta_{m+1}}{2}
\end{split}
\end{equation*}
Combining the latter with the property of $(a,c,\sigma)$-nice functions yields:
\begin{equation*}
\|\w_{m+1}-\w_{m+1}^\ast\|\leq \|\w_{m+1}-\w_m^\ast\|+\|\w_m^\ast-\w_{m+1}^\ast\|\leq\frac{\delta_{m+1}}{2}+\delta_{m+1}\leq 1.5\delta_{m+1}
\end{equation*}
and it follows that,
$$\w_{m+1}^\ast\in\mathbb{B}(\w_{m+1},1.5\delta_{m+1})\subset \mathbb{B}(\w_{m+1},r\delta_{m+1}).$$
Recalling that $\mathcal{C}_{m+1}:=\mathbb{B}(\w_{m+1},1.5\delta_{m+1})$, and the local strong convexity of $F$, then the induction step of the lemma holds.
\end{proof}
\section{Proof of Lemma \ref{lem:bound}}
\begin{proof}
First, we give a lemma which will be used in the following proof.
\begin{lem}\label{lem:smooth}
Assume $f$ is $L$-smoothness, then $$\frac{1}{2L}\|\nabla f(\w)\|^2\leq f(\w)-\min_\alpha f(\alpha).$$
\end{lem}

Consider the function
$$ \varphi(\w)=f(\w+\delta u)-f(\w_\ast+\delta u)-\nabla f(\w_\ast+\delta u)^\top (\w-\w_\ast),$$
then $\nabla \varphi(\w)=\nabla f(\w+\delta u)-\nabla f(\w_\ast+\delta u)$, and $\nabla \varphi(\w_\ast)=0$. Hence $\min_\w\varphi(\w)=\varphi(\w_\ast)=0$. Since $\varphi(\w)$ is $L$-smoothness, by Lemma \ref{lem:smooth}, we have
\begin{equation*}
\begin{split}
\frac{1}{2L}\|\nabla \varphi(\w)\|^2\leq\varphi(\w)-\min_v\varphi(v)=\varphi(\w)-\varphi(\w_\ast)=\varphi(\w).
\end{split}
\end{equation*}
This implies
\begin{equation*}
\begin{split}
\|\nabla f(\w+\delta u)-\nabla f(\w_\ast+\delta u)\|^2
\leq2L[f(\w+\delta u)-f(\w_\ast+\delta u)-\nabla f(\w_\ast+\delta u)^\top (\w-\w_\ast)].
\end{split}
\end{equation*}
By taking expectation on both sides of the above inequality, we obtain
\begin{equation}\label{eq:bound_1}
\begin{split}
\mathbf{E}\|\nabla f(\w+\delta u)-\nabla f(\w_\ast+\delta u)\|^2
\leq 2L[\hat{f}_\delta(\w)-\hat{f}_\delta(\w_\ast)-\nabla\hat{f}_\delta(\w_\ast)(\w-\w_\ast)].
\end{split}
\end{equation}

Because $\w_\ast$ is the optimum point of $\hat F_\delta(\w)$, there exist $\xi_\ast\in \partial h(\w_\ast)$ such that $\nabla f+\xi_\ast=0$. Therefore,
\begin{equation}\label{eq:bound_2}
\begin{split}
\hat{f}_\delta(\w)&-\hat{f}_\delta(\w_\ast)-\nabla\hat{f}_\delta(\w_\ast)(\w-\w_\ast)=\hat{f}_\delta(\w)-\hat{f}_\delta(\w_\ast)+\xi_\ast(\w-\w_\ast)\\
&\leq \hat{f}_\delta(\w)-\hat{f}_\delta(\w_\ast)+h(\w)-h(\w_\ast)=\hat F_\delta(\w)-\hat F_\delta(\w_\ast).
\end{split}
\end{equation}
The inequality uses the convexity of the $h(\w)$. We have the desired result from \eqref{eq:bound_1} and \eqref{eq:bound_2}.
\end{proof}
\section{Proof of Lemma \ref{lem:smooth}}
\begin{proof}
The smoothness implies that we have
  \begin{equation}\label{eq:smooth_1}
  f(\v)\leq f(\w)+\langle\nabla f(\w), \v-\w\rangle+\frac{L}{2}\|\v-\w\|^2.
  \end{equation}
 for all $\v, \w$. Setting $\v=\w-\frac{1}{L}\nabla f(\w)$ in the right-hand side of Eq. \eqref{eq:smooth_1} and rearranging terms, we obtain
 \begin{equation*}
  \frac{1}{2L}\|\nabla f(\w)\|^2\leq f(\w)-f(\v)\leq f(\w)-\min_\v f(\v).
 \end{equation*}
\end{proof}
\end{appendix}



%
%
%

\end{document}